\definecolor{darkblue}{rgb}{0, 0, 0.5}
\renewcommand{\arraystretch}{1.3}
\renewcommand{\paragraph}[1]{\smallskip \par {\bf #1}\/}
\title{\boldmath $\Urania$: Differentially Private Insights into AI Use}
\author{
    \small
\normalfont\fontseries{m}\selectfont
    \begin{tabular*}{.85\textwidth}{@{\extracolsep{\fill}}llll@{}}
        {\bf Daogao Liu} & {\bf Edith Cohen} & {\bf Badih Ghazi} & {\bf Peter Kairouz} \\
        \texttt{liudaogao@gmail.com} & \texttt{edith@cohenwang.com} & \texttt{badihghazi@gmail.com} & \texttt{kairouz@google.com} \\[1em]
        {\bf Pritish Kamath} & {\bf Alexander Knop} & {\bf Ravi Kumar} & {\bf Pasin Manurangsi} \\
        \texttt{pritishk@google.com} &
        \texttt{aaknop@gmail.com
} & \texttt{ravi.k53@gmail.com} & \texttt{pasin@google.com} \\[1em]
        {\bf Adam Sealfon} & {\bf Da Yu} & {\bf Chiyuan Zhang} & \\
    \texttt{adamsealfon@google.com} & \texttt{dayuwork@google.com} &  \texttt{chiyuan@google.com} &  \\
    \end{tabular*}
    \\[1em]
    \centerline{Google Research}
    \vspace{-.5cm}
}
\newtheorem{theorem}{Theorem}[section]
\newtheorem{proposition}[theorem]{Proposition}
\theoremstyle{definition}
\newtheorem{definition}[theorem]{Definition}
\newtheorem{remark}[theorem]{Remark}
\newcommand{\secref}[1]{\hyperref[#1]{\textsection\ref*{#1}}}%
\def\ddefloop#1{\ifx\ddefloop#1\else\ddef{#1}\expandafter\ddefloop\fi}
\def\ddef#1{\expandafter\def\csname #1\endcsname{\ensuremath{\mathbb{#1}}}}
\def\ddef#1{\expandafter\def\csname c#1\endcsname{\ensuremath{\mathcal{#1}}}}
\def\ddef#1{\expandafter\def\csname b#1\endcsname{\ensuremath{\bm #1}}}
\newcommand{\eps}{\varepsilon}
\newcommand{\DLap}{\mathsf{DLap}}
\newcommand{\Clio}{\textsc{Clio}}
\newcommand{\SimplifiedClio}{\textsc{Simple}\text{-}\textsc{Clio}}
\newcommand{\Urania}{\textsc{Urania}}
\newcommand{\KMeans}{\textsc{KMeans}\xspace}
\newcommand{\DPKMeans}{\textsc{DP-KMeans}\xspace}
\newcommand{\SampleConversations}{\textsf{SampleConversations}}
\newcommand{\LLMSummarize}{\textsf{LLM-Summarize}}
\newcommand{\ExtractEmbeddings}{\textsf{ExtractEmbeddings}}
\newcommand{\KwsetTFIDF}{\textrm{KwSet-TFIDF}\xspace}
\newcommand{\KwsetLLM}{\textrm{KwSet-LLM}\xspace}
\newcommand{\KwsetPublic}{\textrm{KwSet-Public}\xspace}
\newcommand{\KwsetHybrid}{\textrm{KwSet-Hybrid}\xspace}
\def\Comments{4} % Set to 4 turns off all comments.
\definecolor{Gred}{RGB}{219, 50, 54}
\definecolor{Ggreen}{RGB}{60, 186, 84}
\definecolor{Gblue}{RGB}{72, 133, 237}
\definecolor{Gyellow}{RGB}{247, 178, 16}
\definecolor{ToCgreen}{RGB}{0, 128, 0}
\definecolor{myGold}{RGB}{231,141,20}
\definecolor{myBlue}{rgb}{0.19,0.41,.65}
\definecolor{myPurple}{RGB}{175,0,124}
\providecommand{\Comments}{1}
\newcommand{\mynote}[1]{\ifnum\Comments<4{\color{Gred}{#1}}\fi}
\newcommand{\todoinline}[1]{\ifnum\Comments<4\todo[inline,linecolor=Gred,backgroundcolor=Gred!25,bordercolor=Gred]{#1}\fi}
\newcommand{\todoauthorsetup}[4]{%
    \expandafter\def\csname #1\endcsname##1{\ifnum\Comments<#4\todo[linecolor=#3,backgroundcolor=#3!25,bordercolor=#3]{#2: ##1}\fi}%
}
\newcommand{\tableoftodos}{\ifnum\Comments=1 \listoftodos[Comments/To Do's] \fi}
\begin{document}

\ifcolmsubmission
\linenumbers
\fi

\maketitle

% \sasha{What about the title "$\Urania$: Differentially Private Insights into Real-World AI Use"}
% \daogao{We used "Keyword-Driven Differentially Private Analysis of LLM Interactions" for ITP. To be updated.}
% \pritish{Haha! Except, we are not actually applying it to ``Real-World AI Use''. :P}
% \sasha{Right, it just feels that it would be good to have a title somewhat similar to Clio one.}
% \sasha{Another option is "$\Urania$: Differentially Private Insights into AI Use"}
\begin{abstract}
% The abstract paragraph should be indented 1/2~inch (3~picas) on both left and
% right-hand margins. Use 10~point type, with a vertical spacing of 11~points.
% The word \textit{Abstract} must be centered and in point size 12. Two
% line spaces precede the abstract. The abstract must be limited to one
% paragraph.

We introduce $\Urania$, a novel framework for generating insights about LLM chatbot
interactions with rigorous differential privacy (DP) guarantees.
The framework employs a private clustering mechanism and innovative keyword extraction methods, including frequency-based, TF-IDF-based, and LLM-guided approaches. By leveraging DP tools such as clustering, partition selection, and histogram-based summarization, $\Urania$ provides end-to-end privacy protection.
Our evaluation assesses lexical and semantic content preservation, pair similarity, and LLM-based metrics, benchmarking against a non-private method inspired by $\Clio$~\citep{tamkin24clio}. Moreover, we develop a simple empirical privacy evaluation that demonstrates the enhanced robustness of our DP pipeline. The results show the framework's ability to extract meaningful conversational insights while maintaining stringent user privacy, effectively balancing data utility with privacy preservation.
\end{abstract}

\section{Introduction}

%%%% Rough notes and discussion:
% \mynote{
% Motivate the higher level problem statement of summarization of text corpora; 
% maybe without reference to LLMs first? 
% E.g., is it a compelling application to summarize 1:1 messages on a corporate chat platform 
% (e.g., Slack) to get a sense of what the company is talking about?
% \pritish{Is that too dangerous to even consider as an application?}
% \sasha{
%     Yes, it is a nasty example if we talk about private discussions, but
%     it could be ok if they are private in a sense that they are in corporate slack,
%     but public within the company. 
%     However, regardless: it is not a great example: let's try figuring out another one.
% }

% Motivate the particular application of understanding LLM chatbot interactions? 
% Papers to cite: Clio~\citep{tamkin24clio}, and follow-up paper~\citep{handa25economic}
% that uses Clio to get insights about which tasks of economic value Claude is used for.

% Argue that Clio~\citep{tamkin24clio} has ``empirical safeguards'' based on $k$-anonymity, 
% but the ``privacy guarantee'' ultimately relies on the LLM to strip out sensitive information from the queries.
% \pritish{Is this accurate to say?}%
% \sasha{It also relies on k-anonimity, right?}%
% \pritish{Right. But is there no possibility for content of some user chat to surface in a summary? Or does it rely on prompts and the LLM itself to work correctly in filtering out ``private'' aspects of a chat?}
% }

Large language models (LLMs) have become ubiquitous tools used by hundreds of millions of users
%\pritish{Citation needed? Do we even want to cite a number?}\sasha{I think it is a casual remark not requiring a citation.}%
daily through chatbots such as \cite{chatgpt}, \cite{gemini}, \cite{claude_ai}, and \cite{deepseek}.
It is valuable for both LLM platform providers and the general public to understand the high-level use cases for which these chatbots are employed. For example, such information could help platform providers detect if the chatbots are used for purposes that violate safety policies.
%, or by governmental institutions to understand the concerns of the general public so they could address them appropriately.
However, while LLM platform providers have access to data of user queries, there are serious privacy concerns about revealing information about user queries.

Recently, \citet{tamkin24clio} proposed $\Clio$, a system that uses LLMs to aggregate insights while preserving user privacy, building on \cite{zheng2023judging,zhao2024wildchat} (see \secref{apx:related} for more details).
%This is motivated by \citet{google_trends}, which provides such insights about web search usage. 
$\Clio$ provides insights about how LLM chatbots (\cite{claude_ai} in their case) are used
in the real world and visualizes these patterns in a graphical interface.
Subsequently, \citet{handa25economic} applied $\Clio$ to more than four million
{\tt Claude.ai} conversations, to provide insight into which economic tasks are performed by or with chatbot help.

$\Clio$ is based on heuristic privacy protections. 
It starts by asking an LLM to produce individual summaries and to strip them out of private information from original user queries.
The queries are clustered based on their individual summaries, and only large clusters are finally released, 
along with a joint summary of all the queries within each cluster. 
Furthermore, these clusters are organized into a hierarchy.

 In other words, the privacy protection that $\Clio$ provides is loosely based on the notion of $k$-anonymity~\citep{sweeney02kanon}. However, the exact privacy guarantee of $\Clio$ is hard to formalize, as it depends on properties of the LLM used in the method. For example, it relies on prompts such as ``{\em When answering, do not include any personally identifiable information (PII), like names, locations, phone numbers, email addresses, and so on. When answering, do not include any proper nouns}''. While such approaches can work reasonably well in practice, they rely on heuristic properties of the LLMs, and hence may or may not work well with future versions of the LLMs, thereby making such a system hard to maintain. To counter this, $\Clio$ also includes a {\em privacy auditor}, also based on an LLM via a prompt that starts as ``{\em You are tasked with assessing whether some content is privacy-preserving on a scale of 1 to 5. Here's what the scale means: $\ldots$}'', while acknowledging that no such system can be perfect. 
 % \da{Shall we refer to \cref{sec:mia} saying that there could be some potential privacy risks of Clio in some cases?}
%
In fact, \cite{tamkin24clio} explicitly note that ``\emph{since Clio produces rich textual descriptions, it is difficult to apply formal guarantees such as differential privacy and $k$-anonymity}''.
Thus, a natural question arises:
\begin{center}
{\em
Is it possible to obtain acceptable utility for the task of summarizing user queries,\\
with formal end-to-end DP  guarantees?}
\end{center}
Differential privacy (DP)~\citep{dwork06calibrating} is a powerful mathematical notion that is considered the gold standard for privacy protection in data analytics and machine learning, and had been widely applied in practice ~\citep[see, e.g.,][]{desfontainesblog21realworld}.

\begin{figure}
\centering
\includegraphics[width=0.9\textwidth]{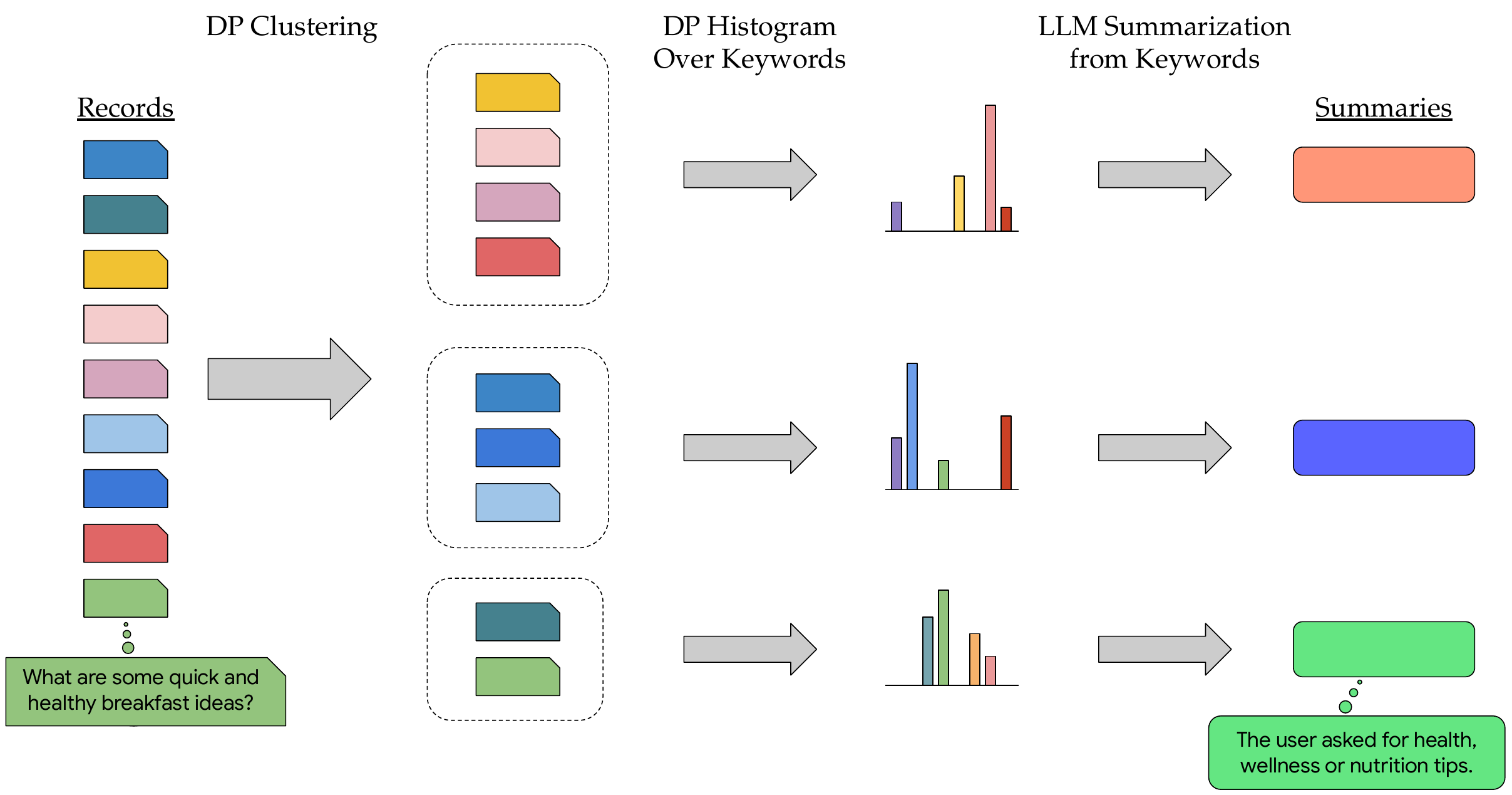}
% https://docs.google.com/drawings/d/1JKmB20Ao1eNi16BYD8gk2KEt_aIX_iAmhtQ_N6CnMrU/edit?usp=sharing
\caption{Illustration of our proposed $\Urania$ method.%
\mynote{$\langle$\href{https://docs.google.com/drawings/d/1JKmB20Ao1eNi16BYD8gk2KEt_aIX_iAmhtQ_N6CnMrU/edit?usp=sharing}{Drawing link}$\rangle$}%
}
\label{fig:dp-pipeline-visual}
\end{figure}

\paragraph{Our Contributions.}
%We focus on the privacy guarantee of differential privacy~\citep{dwork06calibrating}. 
%Our contributions are as follows:
We propose $\Urania$, a novel framework for summarizing user queries to an 
LLM chatbot with a formal end-to-end DP guarantee, that {\em does not rely} on any heuristic properties of the underlying LLM used (\secref{sec:dp-framework}); our approach
%\edith{perhaps replace with: uses LLMs for keyword extraction and utilizes existing tools including private clustering, partition selection, and histogram release}
uses LLMs for keyword extraction, and utilizes existing tools including DP clustering, partition selection and histogram release (see \secref{sec:prelims} for descriptions of these tools). En route, 
\begin{itemize}[leftmargin=10.5mm,topsep=0pt,itemsep=0pt]
\item [In \secref{sec:prelims}:]  we introduce the formal problem statement of {\em text summarization} of user queries and discuss the DP tools used in the work,
\item [In \secref{sec:non-dp-framework}:] we propose a simplified version of $\Clio$ that we refer to as $\SimplifiedClio$ (similar to $\Clio$, this does not satisfy any formal privacy guarantees). We consider this simplified version since the implementation of $\Clio$ is not publicly available, and also, $\Clio$ has additional aspects where it obtains many ``facets'' for each query, e.g., a ``concerning content'' score, language of query, etc., which we ignore for simplicity.%\edith{I would include some justification for $\SimplifiedClio$. Is it because Clio is not available and we need a non-DP baseline? Does it approximate Clio?}\pritishinfo{Good point. Added some explanation.}
\item [In \secref{sec:utility}:] we propose a family of evaluation metrics to compare the quality of any two query summarization methods, and we use these to compare the performance of $\SimplifiedClio$ and $\Urania$, and finally,
\item [In \secref{sec:mia}:] we propose a simple \emph{empirical privacy evaluation} method using a membership inference-style attack to conceptually demonstrate that indeed $\Urania$ is more robust than $\SimplifiedClio$.
\end{itemize}

Although we focus here on the application of summarizing user queries to an LLM chatbot, our techniques could be more generally applicable to summarizing arbitrary text corpora.\footnote{This might require a change to the specific prompts that we use.} 

We view the formalization of the text summarization problem and the design and evaluation of the first DP mechanism for it as our main contributions. We believe there are several avenues for improving such a system, either by better prompt engineering or by using better DP subroutines (e.g., for private clustering). Thus, we view our work as the first step in this promising and important research direction.

\section{Formal Problem and Preliminaries}\label{sec:prelims}

We formalize the ``text summarization'' problem by describing the (i) {\em input} to the problem, 
(ii) desired {\em output}, and (iii) approaches that we use for measuring \emph{utility}.

\paragraph{Input.}
The input is a dataset $\bx=\{x_1,\ldots,x_n\}$ of conversations, where each element $x_i \in \cX$ represents a conversation between a user and an LLM.

\paragraph{Output.}
The desired output for our problem is $\bs = \{s_1, \ldots, s_k\}$, consisting of \emph{summaries} $s_j \in \cS$. The set $\cS$ is potentially infinite (e.g., the set of all strings in our case).
The DP guarantee applies only to the release of this output set $\bs$ of summaries.

\paragraph{Utility.}
Evaluating the utility of privacy-preserving summaries presents unique challenges due to the unstructured nature of text data. 
For utility measurement purposes only, we produce a mapping $\varphi: \bx \to \bs$ so that $\varphi(x_i)$ is the summary with which the record $x_i$ is associated.
Importantly, releasing this mapping would violate DP and hence is never returned---it serves solely as an internal mechanism for evaluation.

Rather than defining an explicit utility function, we adopt a comparative evaluation framework that assesses utility across multiple dimensions:
(i) Lexical and semantic content preservation,
(ii) Topic coverage and representation,
(iii) Embedding-based semantic similarity,
(iv) LLM-based assessment.
This multifaceted approach allows us to quantify the privacy-utility trade-off across different privacy parameter settings and implementation configurations. We defer the detailed discussion of these evaluations and their results to \secref{sec:utility}.
% \begin{itemize}[topsep=0pt,itemsep=1pt]
% \item Lexical and semantic content preservation
% \item Topic coverage and representation
% \item Embedding-based semantic similarity
% \item LLM-based assessment.
% \end{itemize}

% To evaluate the ``quality'' of our summaries, we additionally assume that our algorithm also returns a 
% mapping of records to summaries $\varphi: \cX \to \cS$, 
% where $\varphi(x_i) = s_j$, means that the summary $s_j$ was created using the record $x_j$. 
% \emph{(Note that this is only for ``quality'' measurement since releasing mapping is not differentially private.)}
% We consider a utility function $U : \cX \times \cS \to \R_{\ge 0}$ that rates the quality of a 
% record and its assigned summary, and the final utility of the algorithm is then defined as
% $\Ex_{\by, \varphi \sim \cM(\bx)} \frac{1}{|\cX|} \sum_{x \in \cX} U(x, \varphi(x))$.
% \pritish{Actually, do we even have numerical utility metrics, or just pairwise comparisons of which one is better?}

\subsection{Differential Privacy}
All the datasets in this paper are collections of records.
A randomized function that maps input datasets to an output space is referred to as a \emph{mechanism}.
For mechanism $\cM$ and dataset $\bx$, we use $\cM(\bx)$ to denote the random variable returned over the output space of $\cM$.
Two datasets $\bx$ and $\bx'$ are said to be \emph{adjacent}, denoted $\bx \sim \bx'$, if, 
one can be obtained by adding or removing one record from the other; this is referred to as the ``add-remove'' adjacency.
We consider the following notion of $(\eps, \delta)$-\emph{differential privacy (DP)}.
\begin{definition}
    \label{def:dp}
    Let $\eps > 0$ and $\delta \in [0, 1]$. 
    A mechanism $\cM$ satisfies \emph{$(\eps, \delta)$-differential privacy} ($(\eps, \delta)$-DP for short) if
    for all adjacent datasets $\bx \sim \bx'$, and for any (measurable) event $E$ it holds that
    $
        \Pr[\cM(\bx) \in E] \le e^{\eps} \Pr[\cM(\bx') \in E] + \delta.
    $
\end{definition}
We also say that a mechanism satisfies $\eps$-DP if it satisfies $(\eps, 0)$-DP.
To prove privacy guarantees, we will use basic properties of DP stated in \Cref{prop:dp-properties} (in \secref{apx:dp-tools}).
%\sasha{We need a clean enough user-duplication result stated.}\pritish{What does this mean?}

\paragraph{DP Clustering.} Clustering is a central primitive in unsupervised machine learning~\citep[see, e.g.,][]{IkotunEAAJ23}. %\citep{xu05clustering,aggarwal13clustering}.\pritish{Are these the best citations?}
 Abstractly speaking, a (Euclidean) clustering algorithm takes as input a dataset of vectors in $\R^d$, and returns a set $C \subseteq \R^d$ of $k$ ``cluster centers'' (for a specified parameter $k$), that are representative of the dataset; formalized for example, via the $k$-means objective~\citep{lloyd1982least}. Clustering algorithms have been well-studied in the literature for several decades, and more recently with DP guarantees as well. We discuss more related work in \secref{apx:related}.
While our framework is compatible with any DP clustering algorithm, specifically, 
we use the implementation of DP clustering in the open-source Google DP library\footnote{%
    \url{https://github.com/google/differential-privacy/tree/main/learning/clustering}
}, as described in \cite{chang21practical}; 
we denote the $(\eps, \delta)$-DP instantiation of any such algorithm producing (up to) $k$ clusters 
as $\texttt{DP\text{-}K\text{-}Means}_{k, \eps, \delta}$.

\paragraph{DP Histogram Release.}
%\pritish{Re-write paragraph to clearly say what the input and output is.}
In the private histogram release problem over a set $\cB$ of bins,
the input is a multi-set $\{h_1, h_2, \ldots \}$, where each $h_i \subseteq \cB$ with $|h_i| \le k$.
The desired output is a histogram %$h_* \in \Z^{\cB}$, 
that must be as  close to $\sum_i h_i$ as possible, where we abuse notation to interpret $h_i$ as an indicator vector in $\{0, 1\}^{\cB}$.

%; as before, the notion of adjacency is that $\bh$ and $\bh'$ are adjacent if one can be obtained

% In the private histogram release problem,
% given a collection of numbers,
% the goal is to release a histogram of this collection while satisfying DP.
% There are several adjacency relations that can be considered for this problem; the one we need in this setting
% is as follows: we say that two collections $\{C_i\}_{i \in I}$ and $\{C'_i\}_{i \in I}$ are adjacent iff
% $\max |C_i - C'_i| \le k$.\pritish{Change the letter $C$ here.}

There are multiple algorithms for this problem (e.g., the Laplace mechanism, Gaussian mechanism, etc.); in this paper, we use the discrete Laplace mechanism introduced by \citet{GRS12:discrete-laplace}, where
$\texttt{PHR}_{k, \eps}$ is the instantiation of this algorithm that is 
$\eps$-DP and tolerates removal of up to $k$ elements. See \secref{apx:dp-tools} for formal details.
\begin{theorem}[\cite{GRS12:discrete-laplace}]\label{thm:phr}
    For all integers $k > 0$ and $\eps > 0$: $\texttt{PHR}_{k, \eps}$ satisfies $\eps$-DP.
\end{theorem}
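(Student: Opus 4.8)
The plan is to reduce the claim to the textbook privacy analysis of the discrete Laplace (geometric) mechanism, applied to the count vector $v(\bh) := \sum_i h_i \in \Z^{\cB}$, where each record $h_i$ is read as its $\{0,1\}$-indicator in $\{0,1\}^{\cB}$. First I would establish a sensitivity bound. Under add/remove adjacency, $\bh'$ is obtained from $\bh$ by deleting (or inserting) a single record $h_j$; this shifts the count vector by exactly the indicator of $h_j$, so $\|v(\bh) - v(\bh')\|_1 = |h_j| \le k$. Hence the $\ell_1$-sensitivity of the map $v(\cdot)$ is at most $k$, which is the one structural fact the rest of the argument rests on.

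Next I would record the per-coordinate likelihood ratio of the injected noise. The mechanism $\texttt{PHR}_{k,\eps}$ returns $h_* = v(\bh) + Z$, where the coordinates $Z_b$ are i.i.d.\ $\DLap$ with scale $k/\eps$, i.e.\ with PMF proportional to $e^{-(\eps/k)|z|}$ on $\Z$. For any integers $a, a'$, the reverse triangle inequality $|a'| - |a| \le |a - a'|$ gives
\[
\frac{\Pr[Z_b = a]}{\Pr[Z_b = a']} = e^{-(\eps/k)(|a| - |a'|)} \le e^{(\eps/k)\,|a - a'|}.
\]

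The third step combines the two. Fixing an output $h_*$ and using independence across bins, the conditional likelihoods factorize, and applying the per-coordinate bound with $a = h_*(b) - v(\bh)(b)$ and $a' = h_*(b) - v(\bh')(b)$ yields
\[
\frac{\Pr[\cM(\bh) = h_*]}{\Pr[\cM(\bh') = h_*]} = \prod_{b \in \cB} \frac{\Pr[Z_b = h_*(b) - v(\bh)(b)]}{\Pr[Z_b = h_*(b) - v(\bh')(b)]} \le \prod_{b \in \cB} e^{(\eps/k)\,|v(\bh)(b) - v(\bh')(b)|} = e^{(\eps/k)\,\|v(\bh) - v(\bh')\|_1} \le e^{\eps}.
\]
Summing this pointwise bound over any measurable event $E \subseteq \Z^{\cB}$ gives $\Pr[\cM(\bh) \in E] \le e^{\eps}\Pr[\cM(\bh') \in E]$, which is exactly $\eps$-DP with $\delta = 0$.

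The step I expect to need the most care is the product over $\cB$ in the last display: it is clean only when the noise is added over a fixed, public bin set, so that both conditional laws are supported on the same $\Z^{\cB}$ and no factor is an ill-defined $0/0$. If the set of active bins were instead data-dependent, a newly appearing bin $b$ would contribute a factor $\Pr[Z_b = h_*(b) - 1]/\Pr[Z_b = h_*(b)] \le e^{\eps/k}$, still consistent with $|v(\bh)(b) - v(\bh')(b)| = 1$, but it would force one to handle the bin set through a separate stable mechanism (partition selection) rather than by adding noise to infinitely many coordinates. I would therefore state and prove the theorem for a fixed $\cB$, leaving the discovery of bins to the partition-selection component of the pipeline.
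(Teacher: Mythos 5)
Your proof is correct. The paper does not prove this theorem itself---it attributes it to \cite{GRS12:discrete-laplace} and merely defines $\texttt{PHR}_{k,\eps}$ in \secref{apx:dp-tools} as adding $\DLap(\eps/k)$ noise to each bin of the true histogram---and your argument (per-record $\ell_1$-sensitivity at most $k$, the reverse-triangle-inequality bound on the discrete Laplace likelihood ratio, factorization over a fixed bin set, and summation over events) is exactly the standard analysis underlying that citation, with your closing caveat about fixed versus data-dependent bins matching the paper's own division of labor between $\texttt{PHR}$ and partition selection.
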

\vspace{-.5em}
\paragraph{DP Partition Selection.}
In the private partition selection problem over a (potentially infinite) set $\cB$,
the input is a multi-set $\{h_1, h_2, \ldots \}$ where each $h_i \subseteq \cB$ with $|h_i| \le k$.
The desired output is a subset $S \subseteq \cB$ such that $S \subseteq \bigcup_i h_i$ and $S$ is as large as possible.
We use the partition selection algorithm introduced by \cite{desfontaines22differentially}; 
$\texttt{PPS}_{k, \eps, \delta}$ is the instantiation of this algorithm that is 
$(\eps, \delta)$-DP. See \secref{apx:dp-tools} for formal details.
% In the private partition selection problem, given a collection of objects, the goal is to release 
% the set of elements in the collection while satisfying DP.
% In other words, the dataset $\bx$ here is a multiset of some objects from a potentially infinite set $\Omega$; 
% and two datasets $\bx$ and $\bx'$ are adjacent if one can be obtained from another by removing an element.

%\pritish{Need to update to the version where each record contributes 3-5 keywords. \Cref{alg:partition_selection} is the version where each record only contributes 1.}
\begin{theorem}[\cite{desfontaines22differentially}]\label{thm:pps}
    For all integers $k > 0$, $\eps > 0$ and $\delta \in [0, 1]$: $\texttt{PPS}_{k, \eps, \delta}$ satisfies $(\eps, \delta)$-DP.
\end{theorem}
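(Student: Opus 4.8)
The plan is to establish the guarantee through a per-partition analysis, exploiting that $\texttt{PPS}_{k,\eps,\delta}$ treats each candidate partition independently. Concretely, for each $b \in \cB$ let $n_b := |\{i : b \in h_i\}|$ denote the number of records contributing $b$, and recall that the mechanism releases each $b$ independently with probability $\rho(n_b)$ for a fixed, non-decreasing release function $\rho$ with $\rho(0) = 0$ (so a partition with no contributor is never output). First I would fix adjacent inputs $\bh \sim \bh'$ differing in a single record, say $\bh' = \bh \setminus \{u\}$, whose contributed set is $T \subseteq \cB$ with $|T| = t \le k$. Since removing $u$ changes $n_b$ only for $b \in T$, and by exactly $1$, the output laws of $\texttt{PPS}_{k,\eps,\delta}(\bh)$ and $\texttt{PPS}_{k,\eps,\delta}(\bh')$ are identical on every coordinate outside $T$; hence it suffices to analyze the joint law of the independent Bernoulli outputs on the at most $k$ coordinates in $T$.

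Next I would separate two regimes among the affected partitions. For $b \in T$ with prior count $n_b \ge 1$ in $\bh'$, the count moves between $n_b$ and $n_b+1$, and the per-coordinate likelihood ratios are $\rho(n_b+1)/\rho(n_b)$ when $b$ is released and $(1-\rho(n_b+1))/(1-\rho(n_b))$ when it is not; both are finite and controlled by the shape of $\rho$. The genuinely dangerous regime is $n_b = 0$: such a $b$ is brand-new to $u$, so it can be released under $\bh$ with probability $\rho(1)$ but never under $\bh'$, giving an unbounded multiplicative loss. I would route this event entirely into the $\delta$ term: by a union bound over the at most $k$ newly introduced partitions, the probability that any count-$0$ partition is released is at most $k\,\rho(1)$, so it suffices that $\rho(1) \le \delta/k$. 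Writing $\Pr[\texttt{PPS}_{k,\eps,\delta}(\bh) \in E]$ as its contribution on the event ``some new partition released'' (bounded by $\delta$) plus its contribution on the complement, I reduce the remaining task to a purely multiplicative ($\delta = 0$) bound among partitions that had count $\ge 1$ in both datasets.

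The main obstacle is this final accounting: I must combine the up-to-$k$ simultaneously incremented coordinates into a single factor $e^{\eps}$. A crude per-coordinate composition would charge $\eps/k$ to each partition and recover only a lossy bound, so the point is to argue more tightly. The key observation is that adding $u$ increments \emph{all} affected counts in the same direction, so the signed per-coordinate privacy losses partially cancel: a released coordinate contributes a positive loss $\log(\rho(n_b+1)/\rho(n_b))$, while a suppressed coordinate contributes a negative loss $\log((1-\rho(n_b+1))/(1-\rho(n_b)))$. I would therefore bound the privacy-loss random variable, the sum of these signed contributions, directly, using the monotonicity and the recurrence that $\rho$ is designed to satisfy, to conclude that $\Pr[\texttt{PPS}_{k,\eps,\delta}(\bh) \in E] \le e^{\eps}\Pr[\texttt{PPS}_{k,\eps,\delta}(\bh') \in E] + \delta$ for every event $E$. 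Finally I would verify the symmetric direction corresponding to record addition rather than removal, which follows by the same argument with the roles of $\bh$ and $\bh'$ exchanged, completing the $(\eps,\delta)$-DP proof.
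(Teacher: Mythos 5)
Your proposal should first be set against what the paper actually does: the paper never proves this theorem itself, but imports it by citation from \cite{desfontaines22differentially}; the concrete instantiation in \secref{apx:dp-tools} is truncated-discrete-Laplace thresholding, where each candidate element's count $c_\omega$ receives noise drawn from $\DLap_\tau(\eps')$ with the budgets \emph{already split} as $\eps' = \eps/k$, $\delta' = \delta/k$, and $\omega$ is released iff the noisy count exceeds the threshold $\tau$ computed from $\eps',\delta'$. Your high-level skeleton is the standard one and is compatible with that mechanism: the abstraction to a non-decreasing release function $\rho$ with $\rho(0)=0$ is valid (here $\rho(n) = \Pr[n + \zeta > \tau]$, and truncation gives $\rho(0)=0$); outputs agree on coordinates the differing record does not touch; elements contributed only by that record (count $0$ versus $1$) are routed into the additive $\delta$ term via a union bound over at most $k$ of them; and the remaining affected coordinates are handled multiplicatively.

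The genuine gap is at exactly the step you present as your key idea. You assert that charging $\eps/k$ to each affected coordinate ``would recover only a lossy bound,'' and that one must instead exploit cancellation between the positive losses of released coordinates and the negative losses of suppressed ones. Both halves of this are wrong. The mechanism is calibrated with per-coordinate budgets $\eps' = \eps/k$ and $\delta' = \delta/k$ precisely so that composing over the at most $k$ coordinates a single record can affect yields $(\eps,\delta)$ on the nose: the ``crude'' accounting \emph{is} the proof, not a lossy approximation to it. Conversely, the cancellation argument cannot be made to work, because DP must hold for every event --- in particular for $E = \{\text{all coordinates in } T \text{ released}\}$, on which every signed per-coordinate loss is positive and nothing cancels. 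On that event the likelihood ratio is exactly $\prod_{b\in T}\rho(n_b+1)/\rho(n_b)$, which for this mechanism can approach $e^{k\eps'} = e^{\eps}$, so no argument can beat the per-coordinate sum and your claimed tightening has nothing to tighten. Finally, even the skeleton is left unfinished: the per-coordinate facts you invoke --- $\rho(n+1)\le e^{\eps'}\rho(n)$, the reverse-direction condition $1-\rho(n)\le e^{\eps'}\bigl(1-\rho(n+1)\bigr)+\delta'$ needed for the addition side of adjacency, and $\rho(1)\le \delta'$ --- are never verified for the actual noise distribution and threshold. Those verifications, which is where the specific formula for $\tau$ in the paper's pseudocode enters, constitute the substantive content of the cited proof; referring to ``the recurrence that $\rho$ is designed to satisfy'' without stating or checking it leaves a proof schema rather than a proof.
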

\vspace{-.7 em}

\section{\boldmath \texorpdfstring{$\SimplifiedClio$}{Simplified-Clio}}\label{sec:non-dp-framework}

We present a non-DP (public) framework for hierarchical text summarization,
which is a simplified version of $\Clio$~\citep{tamkin24clio}.
Our approach transforms a dataset $\bx = (x_1, \ldots, x_n) \in \cX^n$ of conversations 
into high-level summaries $\bs = (s_1, \ldots, s_m) \in \cS^m$ through three stages.
The pseudocode can be found in \Cref{alg:non-dp-pipeline} (in \secref{apx:simple-clio-alg}) with the implementation details as discussed below.

\begin{enumerate}
    % \item \textbf{[Facet Generation]} For each conversation, we prompt the LLM to generate facets—specific attributes or characteristics such as high-level conversation topics. These concise facets, rather than the original raw conversations, serve as the primary input for subsequent processing steps.
    \item \textbf{[Embedding Generation]} 
        To use a clustering algorithm, we convert conversations to numerical vectors. A typical way to do this is to use pre-trained embedding models;  specifically, we use \texttt{all-mpnet-base-v2}~\citep{reimers2019sentence,reimers2022allmpnet}. However, following \cite{tamkin24clio}, we first map the conversation to a summary text before applying the embedding model; this ensures that conversations that are similar in content but structured differently get mapped to embeddings that are closer. We use an LLM to map each conversation to the summary text, using a prompt as described in \secref{apx:embeddings_prompt}.
        % we extract summary text and convert it into a $d$-dimensional embedding vector $e_i$
        % using pre-trained models like 
        % \texttt{all-mpnet-base-v2}~\citep{reimers2019sentence,reimers2022allmpnet}. 
        % These embeddings represent the semantic content in a form suitable for clustering.
        % (See \secref{apx:embeddings_prompt} for details.)

    \item \textbf{[Clustering]} 
        We apply the standard $k$-means clustering \citep{lloyd1982least} to group similar conversations 
        based on their embeddings. 
        We select $k$ to achieve an average cluster size of approximately 150 conversations. We assume $\KMeans_k(\cdot)$ returns the centers of clusters and the assignments (points in each cluster).
        % \pritish{Need to explain what $\bc$ and $\bC$ are in the pseudocode. Also, might be better to use other symbols.}

    \item \textbf{[Summary Generation]} 
        For each cluster, we generate a representative summary by sampling both random conversations
        from the cluster and contrastive conversations near the cluster center. 
        We then use an LLM to produce a summary that captures the main theme of the cluster. The prompt used for this step is provided in \secref{apx:llm_summarize_prompt}.
        % \pritish{Need to update this paragraph to talk about ``contrastive summaries'' as well as the pseudocode.}
        % \daogao{I feel it is fine to only leave some comments around the prompt.}
\end{enumerate}

This framework provides an approach for organizing and summarizing large collections of conversations, enabling the identification of key topics without relying on manual annotation or predefined categories. However, as mentioned earlier, it does not satisfy any formal privacy guarantee. We next proceed to describing $\Urania$, our DP framework.
\vspace{-.5em}
\section{\boldmath \texorpdfstring{$\Urania$}{Urania}: A DP Framework for Text Summarization}\label{sec:dp-framework}

\newcommand{\epsc}{\eps_{\mathrm{c}}}
\newcommand{\deltac}{\delta_{\mathrm{c}}}
\newcommand{\epshist}{\eps_{\mathrm{hist}}}
\newcommand{\epssize}{\eps_{\mathrm{size}}}
\newcommand{\epsps}{\eps_{\mathrm{ps}}}
\begin{algorithm}[t]
\small
\SetAlgoLined
\KwParams{Number of clusters $k$, number $t$ of keywords used for summaries, and cluster size threshold $\tau$.}
\SetKwInput{KwParams}{Parameters}
\KwParams{Privacy parameters $\epsc, \epshist, \epssize > 0$, $\deltac \in [0, 1]$.}
\KwIn{Input dataset $\bx = (x_1,  \ldots, x_n) \in \cX^n$, keyword set $\cK$}
\KwOut{High Level Summaries $\bs = (s_1, \ldots, s_k) \in \cS^k$}
\BlankLine

\tcp{Step 1: Extract Embeddings from Conversations}
$\be \leftarrow \{\mathsf{ExtractEmbeddings}(x_i) : i \in [n]\}$

\tcp{Step 2 (a): Cluster embeddings and assign records to cluster centers}
$\bc \leftarrow \DPKMeans_{k, \epsc, \deltac}(\be)$ %\tcp*{Apply DP k-means}%\cite{chang21practical}

% \tcp{Step 4: Assign conversations to nearest centers}
% $\bC \leftarrow \{\bC_1, \bC_2, \ldots, \bC_k\}$ \tcp*{Initialize $k$ empty clusters}
Let $\bC_1, \dots, \bC_k \subseteq \{x_1, \dots, x_n\}$ such that $x_i \in \bC_j$ iff $j = \arg\min_{j'} \|\bc_{j'} - e_i\|_2$ \;

% Let $\bC_j \gets \emptyset$ for $j \in [k]$  \tcp*{Initialize $k$ empty clusters}
% \For{$i \leftarrow 1$ \KwTo $n$}{
%     $j \leftarrow \arg\min_{j'} \|\bc_{j'} - e_i\|_2$ \tcp*{Find nearest center}
%     $\bC_j \leftarrow \bC_j \cup \{x_i\}$ \tcp*{Assign to cluster}
% }

% \tcp{Step 3: Extract keywords}
% % $\bK \leftarrow \{\bK_1, \bK_2, \ldots, \bK_k\}$ \tcp*{Initialize empty keyword sets}
% Let $\bK_j \gets \emptyset$ for $j \in [k]$ \tcp*{Initialize empty keyword sets}
% \For{$j \leftarrow 1$ \KwTo $k$}{
%     $\mathbf{samples}_j \leftarrow \text{RandomSample}(\bC_j, m)$ \tcp*{Sample $m$ conversations from cluster}
%     % $\mathbf{histogram}_j \leftarrow \text{ZeroVector}(|\cK|)$ \tcp*{Initialize count for each keyword}

%     Let $\mathbf{relevant\_keywords}_j$ be an empty collection
    
%     \For{$x \in \mathbf{samples}_j$}{
%         Append $\text{LLMSelectKeywords}(x, \cK, 5)$ to $\mathbf{relevant\_keywords}_j$ \tcp*{Select up to 5 relevant keywords}
%         % \For{$w \in \mathbf{relevant\_keywords}$}{
%         %     $\mathbf{histogram}_j[\text{index}(w, \cK)] \leftarrow \mathbf{histogram}_j[\text{index}(w, \cK)] + 1$ \tcp*{Increment count}
%         % }
%     }
    
%     $\mathbf{private\_hist}_j \leftarrow \texttt{PHR}_{\eps_2 / 5}(\mathbf{relevant\_keywords}_j)$ \tcp*{Apply DP histogram mechanism}
%     $\bK_j \leftarrow \text{TopKeywords}(\mathbf{private\_hist}_j, t)$ \tcp*{Select top $t$ keywords}
% }

\tcp{Step 3: Extract keywords}
Let $\bK_j \gets \emptyset$ for $j \in [k]$ \tcp*{Initialize empty keyword sets}

Let $\bC^\text{(size)} \gets \texttt{PHR}_{1, \eps_{\text{size}}}(\{|\bC_i|\}_{i \in [k]})$  \tcp*{Estimate sizes of clusters.}

\For{$j \leftarrow 1$ \KwTo $k$ such that $\bC^\text{(size)}_j \ge \tau$}{
    \eIf{$|\bC_j| = 0$}{
        $\bK_j \leftarrow \textsf{RandomKeywords}(\cK, t)$ \tcp*{Random keywords for empty clusters}
    }{
    \tcp{Step 3 (b): Keyword generation for qualifying clusters}
        $\mathbf{samples}_j \leftarrow \textsf{RandomSample}(\bC_j, m)$ \tcp*{Sample up to $m$ conversations from cluster}
        Let $\mathbf{relevant\_keywords}_j$ be an empty collection\;
        
        \For{$x \in \mathbf{samples}_j$}{
            Append $\textsf{LLMSelectKeywords}(x, \cK, 5)$ to $\mathbf{relevant\_keywords}_j$;
            
            \hfill \tcp{Select up to 5 relevant keywords}
        }

        Let $\br_{j, k}$ be the number of times a keyword $k \in \cK$ is present in $\mathbf{relevant\_keywords}_j$\;
        
        $\mathbf{private\_hist}_j \leftarrow \texttt{PHR}_{5, \epshist}(\{\br_{j ,k}\}_{k \in \cK})$ \tcp*{Apply DP histogram mechanism}
        $\bK_j \leftarrow \textsf{TopKeywords}(\mathbf{private\_hist}_j, t)$ \tcp*{Select top $t$ keywords}
    }
}

\tcp{Step 4: Generate summaries}
% $\bs \leftarrow \emptyset$ \tcp*{Initialize summaries set}
% \For{$j \leftarrow 1$ \KwTo $k$}{
%     $s_j \leftarrow \text{LLMSummarize}(\bK_j)$ \tcp*{Generate private summary}
%     $\bs \leftarrow \bs \cup \{s_j\}$
% }

$\bs \leftarrow \{\textsf{LLMSummarize}(\bK_j) : j \in [k]\}$

\Return{$\bs$}
\caption{$\Urania_{k, t, \epsc,  \epshist, \eps_{\text{size}}, \deltac}$: Differentially Private Text Corpora Summarization}
\label{alg:dp-main}
\end{algorithm}

%\subsection{Urania: A Differentially Private Framework for Text Summarization}

We present $\Urania$, a framework for hierarchical text summarization (with Hierarchy and Simple Visualization presented in \secref{apx:hierarchy}) that 
builds on the public framework while providing formal DP guarantees. 
Unlike $\SimplifiedClio$ that is non-private, $\Urania$ incorporates privacy-preserving mechanisms at critical steps
to ensure that the resulting summaries do not leak sensitive information about individual conversations.

Our approach takes a dataset $\bx = (x_1, \ldots, x_n) \in \cX^n$ of conversations, 
a predefined set $\cK$ of keywords,
a number $k \in \N$ of clusters, a number of keywords used for summary, and
privacy parameters $\eps_c, \epshist,  \epssize > 0$ and $\delta_c \in [0, 1]$.
The choice of the set $\cK$ of keywords is critical to the quality of the generated summaries; we discuss multiple strategies for this in \secref{subsec:keywords-generation}.
Our method produces high-level topical summaries $\bs = (s_1, \ldots, s_k) \in \cS^k$ through the 
following privacy-preserving pipeline:

\begin{enumerate}[nosep]
\item \textbf{[Embedding Generation]}
    This step is the same as in $\SimplifiedClio$.
\item \textbf{[Clustering]} 
    Non-private clustering algorithms usually output both cluster centers and 
    cluster assignments. 
    However, cluster assignments are non-private for add-remove adjacency; hence, DP clustering algorithms output only
    cluster centers.
    \begin{enumerate}[nosep]
        \item First, we generate cluster centers using an $(\epsc, \deltac)$-DP k-means algorithm.  Although we use the algorithm described in~\cite{chang21practical}, any DP clustering algorithm can be used here and the same privacy guarantees would follow.
        \item Each conversation is assigned to its nearest cluster center.
        % based on the Euclidean distance.
    \end{enumerate}
\item \textbf{[Keyword Extraction]} For each cluster, we:
    \begin{itemize}[nosep]
        \item Apply DP to the cluster size with privacy parameter $\eps_{\text{size}}$. If the noised size is smaller than a threshold $\tau$, we skip the cluster to avoid generating summaries that will not be meaningful.
        \item Sample several conversations from the cluster. For empty clusters that pass the threshold due to noise, we randomly select keywords from the predefined set.
        \item Use an LLM to extract relevant keywords from the predefined set $\mathcal{K}$ for each sampled conversation.
        For the purpose of DP, we restrict the number of keywords chosen per conversation to at most 5. See prompts in \secref{apx:LLMSelectKeywords_prompt}.
        \item Apply a DP histogram mechanism with privacy parameter $\epshist$ to identify the most frequent keywords.
        \item Test various settings of $\eps_{\text{size}}$ and $\tau$ to evaluate the trade-off between privacy protection and utility.
    \end{itemize}
\item \textbf{[Summary Generation]} For each cluster, we generate a high-level summary using only the selected keywords. See prompts in \secref{apx:llm_summarize_2_prompt}. We also discuss creating a hierarchy of summaries in \secref{apx:hierarchy}.
\end{enumerate}

\begin{theorem}[Privacy Guarantee]
    Let $k, t \in \N$, $\epsc, \epshist,\epssize > 0$, and $\deltac \in [0, 1]$.
    Then $\Urania_{k, t, \epsc,  \epshist, \eps_{\text{size}}, \deltac}$ satisfies $(\epsc + \epshist + \epssize, \deltac)$-DP.
\end{theorem}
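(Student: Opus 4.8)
The plan is to prove the privacy guarantee by \textbf{composition} over the three DP mechanisms that actually touch the sensitive data, together with a \emph{post-processing} argument for the LLM-based steps. The key observation is that the final output $\bs$ is a deterministic (or at least data-independent-randomness) function of the noisy cluster sizes $\bC^{\text{(size)}}$, the noisy per-cluster keyword histograms $\{\mathbf{private\_hist}_j\}_j$, and the DP cluster centers $\bc$. Everything else—the assignment of records to nearest centers, the sampling of conversations, the LLM keyword selection, and the final \textsf{LLMSummarize} call—is applied \emph{on top of} these three privatized quantities and hence is post-processing. So the whole mechanism privacy collapses to analyzing the composition of $\DPKMeans_{k,\epsc,\deltac}$, $\texttt{PHR}_{1,\epssize}$, and the collection of histogram calls $\texttt{PHR}_{5,\epshist}$.

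First I would formalize the decomposition. Let $\cM_1 = \DPKMeans_{k,\epsc,\deltac}$ produce $\bc$, which by assumption is $(\epsc,\deltac)$-DP. Conditioned on $\bc$, the partition $\bC_1,\dots,\bC_k$ is a deterministic function of the input $\bx$, so I treat the centers as fixed public randomness and view the remaining computation as a mechanism on $\bx$ (this is exactly why DP clustering releases only centers, not assignments). Next, $\cM_2 = \texttt{PHR}_{1,\epssize}$ on the vector of cluster sizes $\{|\bC_i|\}$ is $\epssize$-DP by the cited histogram theorem, since adding or removing one record changes exactly one coordinate of the size vector by $1$ (i.e.\ the per-record contribution has $\ell_1/\ell_\infty$-sensitivity bounded by the stated removal count $k=1$). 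The third component is the family of keyword histograms: across all qualifying clusters, a single conversation $x$ contributes to at most one cluster (it is assigned to its unique nearest center), and within that cluster \textsf{LLMSelectKeywords} emits at most $5$ keywords, so the combined contribution of one record to the concatenated histogram has bounded support, which is precisely the $k=5$ budget that makes $\texttt{PHR}_{5,\epshist}$ yield $\epshist$-DP for the entire keyword-extraction stage \emph{as a single release}, not one $\epshist$ charge per cluster.

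I would then apply \textbf{basic composition} (from the referenced \Cref{prop:dp-properties}): the composition of an $(\epsc,\deltac)$-DP mechanism, an $(\epssize,0)$-DP mechanism, and an $(\epshist,0)$-DP mechanism is $(\epsc+\epssize+\epshist,\ \deltac)$-DP. Finally, the post-processing property ensures that applying the LLM summarization map to the privatized triple does not increase the privacy loss, giving the claimed $(\epsc+\epshist+\epssize,\deltac)$-DP for $\Urania$.

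The main obstacle I anticipate is the third point above: arguing that the entire keyword-extraction stage costs only $\epshist$ rather than $k\cdot\epshist$. This requires being careful that the per-cluster histogram calls are not independent releases but together constitute one histogram over the product bin-space (cluster $\times$ keyword), on which a single record has sensitivity at most $5$ because it lands in exactly one cluster. Relatedly, I would need to confirm that the size thresholding (skip clusters with noisy size below $\tau$) and the empty-cluster random-keyword branch are themselves post-processing of $\bC^{\text{(size)}}$ and of data-independent randomness, so they introduce no additional privacy cost; this is straightforward once the decomposition is stated cleanly, but it is the step most easily gotten wrong.
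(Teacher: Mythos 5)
Your proposal is correct and follows essentially the same route as the paper's proof: the clustering step is $(\epsc,\deltac)$-DP, the size histogram is $\epssize$-DP and the keyword stage is $\epshist$-DP once the centers are treated as fixed, these are combined by basic (adaptive) composition, and the final summarization is handled by post-processing. Your ``single histogram over the cluster $\times$ keyword product bin-space with per-record sensitivity $5$'' argument is just the paper's parallel-composition step phrased more explicitly; the only wording to tidy is your opening claim that assignment, sampling, and keyword selection are ``post-processing''---as your own formalization then makes clear, they are per-record preprocessing \emph{inside} the $\epshist$-DP mechanism (bounding each record's contribution), not post-processing of its output.
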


\begin{proof}
    The algorithm can be split into four parts: 
    cluster center generation from the original dataset (steps 1-2(a)), 
    estimating cluster sizes,
    an algorithm generating key words for a given list of cluster centers passing size test and the original dataset (step 3 (b)), 
    and
    summary generation from a given set of keywords (step 4). We use DP properties (\Cref{prop:dp-properties}).

    The first part satisfies $(\epsc, \deltac)$-DP since it consists of operations applied per record and subsequent  application
    of the $(\epsc, \deltac)$-DP instantiation of the $\DPKMeans$ algorithm.
    The second part is $\epssize$-DP provided that the set of cluster centers is fixed since it again applies per-record operations and an $\epssize$-DP histogram release algorithm.
    The third part is $\epshist$-DP provided that the set of cluster centers is fixed since it again applies per-record operations and an $\epshist$-DP histogram release algorithm, by parallel composition.
    Altogether, this means that the algorithm producing $\bK_1$, \dots, $\bK_k$ is $(\epsc+\epshist+\eps_{\text{size}}, \deltac)$-DP
    due to basic composition.

    Finally, note that the third part is processing the output of the first two parts, so due to the post-processing
    property of DP, the entire algorithm satisfies $(\epsc+\epshist+\eps_{\text{size}}, \deltac)$-DP.
\end{proof}

\begin{remark}
The process of choosing the set of keywords $\cK$ could itself consume some privacy budget, which needs to be accounted for separately. We discuss these methods next.
\end{remark}

% This differentially private approach allows us to generate meaningful summaries of conversational data while providing formal privacy guarantees, ensuring that sensitive information about individual conversations is protected.

% \mynote{
% \begin{itemize}
% \item Run DPkmeans based on \cite{chang21practical}. 
% \item Assign each conversation to its nearest center. Draw a few samples for each cluster; 
% for each sample conversation, ask Gemini to choose a few key words from the kwset; use DP histogram to choose the most frequent key words from the kwset.
% \item Generate a high-level summary and name for each cluster using an LLM. This step has no privacy leakage.
% \end{itemize}
% }

\subsection{\boldmath Choosing the Set of Keywords \texorpdfstring{$\cK$}{K}}
\label{subsec:keywords-generation}
% \mynote{
%  Generation of keyword set. We have different ways, with a,b DP and c public.
%     \begin{itemize}
%     \item Ask LLM to choose 3-5 keywords for each conversation, then apply partition selection mechanism (DP) and get thousands of keywords; Refined by Gemini and get around 200 keywords finally
%     \item Compute the noisy DF (document frequency), and then choose 3-5 keywords for each conversation; apply partition selection (DP) and get thousands of keywords; Refined by Gemini and get 200 keywords finally
%     \item Extract a set of keywords based on TF-IDF, and then ask LLM to sequentially update it (given the current keywords set, new conversations, ask LLM to output words\_to\_remove and words\_to\_add)
%     \item Combine a and c; given the public kwset, and ask LLM to choose some keywords if it finds necessary, 
%         and apply DP histogram (not strictly DP, need to be changed to partition selection mechanism); will update when got time
%     \end{itemize}
% }

The effectiveness of $\Urania$ crucially depends on the quality of the keyword set $\cK$ used during cluster summarization. We explore four approaches for constructing keyword sets $\cK$, with varying privacy guarantees: namely, \KwsetTFIDF and \KwsetLLM satisfy DP, \KwsetPublic uses a public dataset to obtain keywords, and \KwsetHybrid uses a hybrid approach. We describe these methods informally below and formally in \secref{apx:kwset-algs}.
%both a public and the private dataset (latter with DP).
We discuss some related work in \secref{apx:related}.
%\pritish{Maybe add pointer to additional related work here: \secref{apx:related}?}

%\subsubsection{DP Approaches}

\paragraph{TF-IDF-Based Selection with Partition Selection (\KwsetTFIDF).}
We compute the document frequency (DF) of all terms with additive Laplace noise to satisfy DP. 
Using these noisy DF values with token frequency (TF), we select 3--5 keywords for each conversation.
We then apply partition selection (\Cref{thm:pps}) to privately identify significant keywords, followed by
LLM refinement to obtain $\sim$200 final keywords (see \secref{apx:prompt-kwset-llm} for prompt used for refining). 
This approach is better suited when direct LLM access to conversations raises additional privacy concerns.

\paragraph{LLM-Based Selection with Partition Selection (\KwsetLLM).} 
We first use an LLM to select 3--5 keywords for each conversation independently. 
This produces a large candidate set of keywords across all conversations. 
We then apply partition selection to privately identify the most representative keywords while satisfying $(\eps, \delta)$-DP. 
This yields several thousand keywords, which we refine using the LLM to produce a final set of $\sim$200 keywords. See detailed prompts in \secref{apx:prompt-kwset-llm}.

%\subsubsection{Public Approach}

\paragraph{Iterative NLP-based Refinement (\KwsetPublic).}
As a non-private baseline, we first use multiple NLP techniques (Named Entity Recognition, noun chunk extraction, and RAKE keyword extraction) to extract an initial set of keywords from a standalone (but related) public dataset. 
We then sequentially update this set using an LLM. 
In each iteration, we present the LLM with the current set of keywords and each new conversation, prompting it to 
output \texttt{words\_to\_remove} and \texttt{words\_to\_add}. 
This sequential refinement continues until the keyword set stabilizes or reaches a desired size; see detailed prompt in \secref{apx:prompt-kwset-public}.

%\subsubsection{Hybrid Approach}

\paragraph{Combined Public-Private Selection (\KwsetHybrid).}
In this hybrid approach, we leverage \KwsetPublic while maintaining privacy guarantees. 
We provide the LLM with \KwsetPublic along with several private conversations, asking it to output relevant keywords only when necessary. 
We then apply partition selection over these selected keywords to ensure DP. 
We propose this approach to address potential distribution shifts between the public conversations used in constructing 
\KwsetPublic and the private conversations we aim to analyze. 
This hybrid method allows for adaptation to new topics or terms in the private dataset while still benefiting from 
the high-quality foundation of the public keyword set. See detailed prompt in \secref{apx:prompt-kwset-hybrid}.

% For our primary experiments, we use \KwsetTFIDF as it provides the strongest formal privacy guarantees while maintaining good keyword quality. 
% The other approaches are evaluated for comparative analysis.

\section{Evaluations}\label{sec:utility}

Our evaluation strategy adopts a multifaceted approach to assess the privacy-utility tradeoff in our DP pipeline. 
We compare summaries generated by our private $\Urania$ pipeline against those from the (non-private) $\SimplifiedClio$ on identical conversation sets, using the latter as a reasonable proxy for ground truth.
%—an advantage not available in the original work.

% Our evaluation strategy adopts a multifaceted approach, comparing summaries generated by our private pipeline against those from the public pipeline on identical conversation sets. 
We employ both non-LLM methods (comparing key phrases, n-grams, topics, and embedding similarities) and LLM-based evaluations (comparative quality assessments and independent scoring) to comprehensively measure how well our privacy-enhanced summaries preserve the essential information captured in the public summaries. This approach allows us to quantify both the utility preservation and information retention of our DP pipeline while maintaining stronger formal privacy guarantees than the original framework. Additionally, we evaluate privacy protection through AUC score comparisons between the public and private pipelines, providing a more complete picture of the privacy-utility tradeoff in our system, which will be presented in the later sections (See \secref{sec:mia}).

% The Clio paper 

% To evaluate $\Urania$, we compare summaries generated by our differentially private pipeline against those produced by the non-private baseline. This comparative approach allows us to measure the privacy-utility trade-off across different privacy parameter settings and keyword set configurations.

\subsection{Evaluation Methodology}

Our evaluation methodology involves running both the public and private pipelines on the same conversation dataset, generating two sets of summaries: public (non-DP) and private (DP). We then compare these summaries using both automated metrics and LLM-based evaluations.

\paragraph{Automated Evaluation.}
We employ three automated approaches to measure the similarity between private and public summaries:\pritishimp{These metrics need to be defined more formally.}

\begin{enumerate}[topsep=0pt,itemsep=0pt]
\item \textbf{Lexical Content Analysis:} We extract key phrases, noun chunks, and TF-IDF keywords from both sets of summaries and compute similarity metrics with Jaccard similarity\pritish{What does {\em cosine similarity} mean? Does it require computing some embedding of the feature sets? Which one exactly is used in \Cref{tab:lexical-ngram}?} between these feature sets.

\item \textbf{N-gram and Topic Analysis:} We analyze tokens and 2-grams extracted from both sets of summaries to assess content preservation at different granularities. Additionally, we use {\tt BERTopic} \citep{grootendorst2022bertopic} to extract topics from both sets of summaries and measure their overlap.

\item \textbf{Embedding Space Proximity:} For each private summary, we compute its distance\pritish{``distance'' or ``cosine similarity''? \Cref{tab:embedding-llm_eval} seems to report the latter.} to the nearest public summary in the embedding space using the SentenceTransformer model with the {\tt all-mpnet-base-v2} embedding. This measures how well the private summaries preserve the semantic content of their public counterparts.
\end{enumerate}

\paragraph{LLM-based Evaluation.}
We complement automated\pritish{We need a different word than ``automated''. After all, even the LLM based methods are automated in the sense that we don't have humans looking at any conversations and summaries.} metrics with LLM-based evaluations that assess summary quality.
In the prompt, we randomly chose the order of public and private summaries to avoid positional bias.

\begin{enumerate}[resume,topsep=0pt,itemsep=0pt]
\item \textbf{Comparative Ranking:} We randomly sample conversations along with their corresponding private and public summaries. An LLM evaluator rates which summary is better on a scale from 1-5, where 1 indicates the private summary is clearly better and 5 indicates the public summary is clearly better.

\item \textbf{Binary Preference:} For sampled conversations, an LLM evaluator makes a binary choice between private and public summaries, selecting which one better summarizes the original conversation.
\end{enumerate}

\Cref{tab:lexical-ngram,tab:embedding-llm_eval} present the results of our comprehensive evaluation, comparing the private and public summaries across various metrics and configurations.
The results of lexical content, n-gram and topic analysis are presented in \Cref{tab:lexical-ngram}, and the embedding space proximity and LLM-based evaluation results can be found in \Cref{tab:embedding-llm_eval}.

Finally, we propose a simple \emph{empirical privacy evaluation} method using a membership inference-style attack to conceptually demonstrate that $\Urania$ is indeed more robust than $\SimplifiedClio$; details in \secref{sec:mia}.

% \renewcommand{\arraystretch}{1.2}
% \begin{table}[ht]
% \centering
% \caption{New Experimental Configurations.}
% \begin{tabular}{llll}
% \toprule
% Configuration & $\epsc$ & $\epshist$ & Keyword Set \\
% \midrule
% Very Low privacy & 10.0 & 5.0 & \KwsetTFIDF \\
% Low Privacy & 8.0  & 4.0 & \KwsetTFIDF \\
% Medium Privacy & 4.0 & 2.0 & \KwsetTFIDF  \\
% High Privacy & 2.0 & 1.0 & \KwsetTFIDF  \\
% Low Privacy & 8.0 & 4.0 & \KwsetLLM  \\
% Low Privacy & 8.0 & 4.0 & \KwsetPublic \\
% Low Privacy & 8.0 & 4.0 & \KwsetHybrid \\
% No Privacy & $\infty$ & $\infty$ & \KwsetTFIDF (with $\epssize=\infty)$ \\
% \bottomrule
% \end{tabular}
% \label{tab:configs_new}
% \end{table}

%\subsection{Results}

% \subsubsection{Automated Evaluation Results}

\subsection{Experimental Setup \& Results}

\begin{table}[h]
\small
\centering
\caption{Lexical content, n-gram, and topic similarity Between Private and Public Summaries.}
\begin{tabular}{r|ccc|ccc}
\toprule
Configuration & Key & Noun & TF-IDF & Tokens & 2-Grams & Topics \\ 
 &  Phrases & Chunks & & & & \\ 
\midrule
Very Low Privacy ($\epsc$=10.0, $\epshist$=5.0) & 0.028 & 0.026 & 0.96 & 0.133 & 0.100 & 0.723 \\ 
Low Privacy ($\epsc$=8.0, $\epshist$=4.0) & 0.026 & 0.025 & 0.96 & 0.130 & 0.096 & 0.461 \\ 
Medium Privacy ($\epsc$=4.0, $\epshist$=2.0) & 0.027 & 0.023 & 0.96 & 0.113 & 0.077 & 0.271 \\ 
High Privacy ($\epsc$=2.0, $\epshist$=1.0) & 0.035 & 0.021 & 0.94 & 0.063 & 0.033 & 0.078 \\ 
\KwsetLLM (Low Privacy) & 0.032 & 0.031 & 0.98 & 0.090 & 0.065 & 0.589 \\ 
\KwsetPublic (Low Privacy) &0.035 & 0.021 & 0.98  & 0.131 & 0.083 & 0.514 \\ 
\KwsetHybrid (Low Privacy) & 0.030 & 0.022 & 0.96 & 0.145 & 0.092 & 0.446 \\ 
No Privacy ($\varepsilon=\infty$) & 0.024 & 0.027 & 0.98 & 0.139 & 0.112 & 0.907 \\ 
\bottomrule
\end{tabular}
\label{tab:lexical-ngram}
\end{table}

\begin{table}[h]
\centering
\small
\caption{Embedding space proximity between and comparative LLM evaluation of private and public summaries.}
\begin{tabular}{r|cc|cc}
\toprule
Configuration & Mean & Median  & Comparative & Binary  \\
& Cosine & & Ranking (1--5, & Preference \\
& Similarity & & lower is better) & (\% DP preferred) \\ 
\midrule
Very Low Privacy ($\epsc$=10.0, $\epshist$=5.0) & 0.750 & 0.745 & 2.56 & 62  \\
Low Privacy ($\epsc$=8.0, $\epshist$=4.0) & 0.749 & 0.744  & 2.40 & 65 \\
Medium Privacy ($\epsc$=4.0, $\epshist$=2.0) & 0.748 & 0.742 & 2.39 & 69 \\
High Privacy ($\epsc$=2.0, $\epshist$=1.0) &0.774  & 0.773 & 2.50 & 64 \\
\KwsetLLM (Low Privacy) & 0.747 & 0.744 & 2.37 & 68 \\
\KwsetPublic (Low Privacy) & 0.735 & 0.728 & 2.64 & 69 \\
\KwsetHybrid (Low Privacy) & 0.739 & 0.732 & 2.55 & 70 \\
No Privacy ($\varepsilon=\infty$) & 0.759 & 0.754 & - & - \\
\bottomrule
\end{tabular}
\label{tab:embedding-llm_eval}
\end{table}

We analyze the conversations from the popular public datasets {\tt LMSYS-1M-Chat} \citep{zheng2023judging}.
Specifically we use {\tt gemini-2.0-flash-001} language model.
In constructing \KwsetPublic, we use the public {\tt WildChat} dataset \citep{zhao2024wildchat}.
In particular, we set the cluster size privacy parameter $\eps_{\text{size}}=1$ for cluster size verification and implement a 1-DP keyword set generation. 

We evaluate our approach across different privacy parameter settings and keyword set configurations. We consider varying values of $\epsc$ and $\epshist$, while keeping the keyword set to be generated using $\KwsetTFIDF$. We also fix $\epsc=8.0$ and $\epshist=4.0$ and evaluate with different choices of keyword sets. Note that $\Urania$ with $\eps = \infty$ is still distinct from $\SimplifiedClio$ because the former generates cluster summaries via keywords, whereas the latter does so directly from individual summaries.

\subsection{Discussion}

The evaluation results reveal several key insights about the performance of our framework.

\paragraph{Privacy-Utility Trade-off.} As expected, we observe a trade-off between privacy protection and summary quality. Higher privacy guarantees (lower $\varepsilon$ values) generally result in lower similarity to public summaries, particularly evident in the declining topic similarity scores in \Cref{tab:lexical-ngram}, where coverage drops from 0.723 at very low privacy to just 0.078 at high privacy. However, the degradation is not uniform across all metrics, suggesting that some aspects of summary quality are more robust to privacy noise than others.

\paragraph{Impact of Keyword Set Selection.} The choice of keyword set significantly influences the quality of private summaries. \Cref{tab:lexical-ngram} demonstrates that different keyword sets yield varying performance despite identical privacy parameters. For example, \KwsetLLM has better topic coverage compared to \KwsetTFIDF at the same level of privacy. This suggests that carefully curated keyword sets can partially mitigate the utility loss from DP.

\paragraph{Topic Coverage and Preservation.} Following $\Clio$'s evaluation approach, we treat public summaries as ground truth and measure the percentage of topics successfully captured by our DP method. Our private approach demonstrates reasonable topic coverage at lower privacy levels (0.723 at $\epsc=10.0$), but this metric proves highly sensitive to increased privacy protection, precipitously declining to 0.078 at high privacy ($\epsc=2.0$). 

Examining the transition from very low privacy ($\epsc$ = 10.0) to low privacy ($\epsc$ = 8.0) in Table~\ref{tab:lexical-ngram}, while the number of private summaries decreases modestly from approximately 3,700 to 3,300 (an 11\% reduction), topic coverage drops more dramatically from 0.723 to 0.461 (a 36\% reduction). This disproportionate decline suggests that the DP clustering algorithm may systematically exclude certain conversation types even at this early privacy transition, though our current evaluation cannot determine which specific topics are affected. This sharp threshold effect raises important research questions about how different topic types respond to privacy constraints and whether alternative DP approaches might exhibit more gradual degradation.
% This trend indicates that rare or niche topics are progressively marginalized in private summaries as privacy guarantees strengthen—an anticipated limitation of DP algorithms that tend to preserve majority patterns while potentially obscuring minority representations.

A primary contributing factor to this performance degradation is our \DPKMeans implementation. As the privacy budget tightens, the algorithm generates fewer viable cluster centers, with many centers positioned suboptimally relative to actual data points. Consequently, some centers fail to attract any conversation assignments, further compromising topic coverage at higher privacy levels.

The clustering performance illustrates this progressive decline quantitatively: the number of final clusters decreases from approximately 3,700 at very low privacy to just 300 at high privacy configurations. While this trend underscores the cascading effect of DP on clustering quality, it also raises an intriguing research question: Can alternative DP-clustering methods or novel implementations mitigate these limitations? Although such performance degradation appears somewhat inherent to DP techniques, which fundamentally trade granularity for privacy protection, exploring alternative algorithmic approaches may reveal strategies to better preserve minority representations while maintaining robust privacy guarantees.

\paragraph{Semantic Preservation.} The embedding-based evaluation in \Cref{tab:embedding-llm_eval} 
% (in \Cref{sec:table_in_appendix}) 
shows consistently high cosine similarity scores across all configurations (0.73--0.77), indicating that private summaries generally maintain the semantic essence of their public counterparts, even at higher privacy levels. This is particularly notable because it suggests that overall meaning is preserved even when specific lexical content differs.

\paragraph{LLM Evaluation Insights.} \Cref{tab:embedding-llm_eval} shows that LLM evaluators sometimes prefer private summaries over public ones, with comparative ranking scores generally below 3 (where 3 would indicate no preference). The binary preference results further confirm this trend, with 62--70\% of evaluations favoring the DP-generated summaries. This suggests that the constraints imposed by our DP approach (limiting to predetermined keywords, focusing on most frequent themes) can occasionally produce more concise and focused summaries than the unconstrained public approach.

\paragraph{Independent Scoring Results.} We have an LLM evaluator independently score private and public summaries on a scale from 1-5 (where 1 means very poor and 5 means excellent) based on how well they summarize the original conversations. The private summaries are slightly better than public summaries, but both achieve very low scores on average ($<$1.4). This suggests that there might be significant room for improvement in the overall quality of conversation summarization, regardless of privacy considerations, and highlights the challenging nature of the summarization task itself.

These findings highlight both the capabilities and limitations of DP text summarization. Our framework demonstrates that it is possible to generate meaningful summaries while providing formal privacy guarantees, but practitioners should carefully consider the privacy-utility trade-off when configuring the system for real-world applications. 
% The surprising preference for DP-generated summaries in human evaluation also suggests that certain constraints imposed by privacy mechanisms might actually improve summary quality in some contexts, an intriguing direction for future research.
However, qualitative analysis of specific examples (see \secref{app:summary_examples}) reveals important limitations in our private summaries that may not be captured by automated evaluation metrics.

\section{Discussion and Future Work}\label{sec:discussion}

% \mynote{
% Open Directions:
% \begin{itemize}
% \item While we formalize a specific membership inference attack, it will be interesting to obtain stronger attacks against a non-DP system.\pritish{Is this a good thing to state for future direction?}\daogao{I agree. There is even not much attack around clustering-based algorithm.}
% \daogao{The online setting where new conversations arose. We can leave it as an open problem.}

% \item Improving utility of the non-DP pipeline is also an interesting direction.
% \end{itemize}
% }

Our work demonstrates that meaningful privacy guarantees can be achieved while maintaining useful conversation summarization capabilities. We identify several future directions:\
% pritishinfo{Maybe we can mention somewhere that ``While DP with small privacy parameters is desirable, it has been observed that satisfying DP even with large privacy parameters tend to provide tangible benefits in empirical privacy evaluations~\citep{desfontainesblog24large,lowy24large}.'' We can also leave it for now, and use it during rebuttal if questioned.}

\paragraph{Stronger Privacy Attacks.} While we formalized a specific empirical privacy evaluation, developing stronger attacks against non-DP summarization systems remains an important direction. The privacy vulnerability of clustering-based algorithms is relatively unexplored compared to other machine learning paradigms. More sophisticated attacks would provide valuable insights into the precise privacy risks of non-DP systems and better quantify the benefits of DP approaches.

\paragraph{Online Learning and Adaptation.} An important extension would be adapting our framework to online settings where new conversations continuously arise. This presents challenges in maintaining privacy guarantees while incorporating new data, evolving keyword sets to capture emerging topics, and efficiently updating cluster structures. This remains an open problem for real-world applications of private conversation summarization.

\paragraph{Utility Improvements.} Further improvements could narrow the quality gap between private and non-private summarization approaches. These include exploring alternate privacy mechanisms with better utility-privacy trade-offs, developing more sophisticated keyword selection methods, and refining summarization prompts to generate more informative cluster summaries.

\paragraph{User-level DP.} Our current pipeline exclusively addresses DP at the record-level, where privacy guarantees are provided when only a single conversation is modified. However, in practical scenarios, individual users typically contribute multiple conversations with large language models (LLMs). This motivates an important open research problem: extending the privacy notion to user-level DP, wherein the privacy guarantees hold when all conversations from a single user are removed.
% \todoinline{Pritish: We consider {\em record-level} DP. But might be important to consider {\em user-level} DP ??}

\paragraph{Broader Privacy Landscape.} While our work focuses on formal DP guarantees for released summaries, we acknowledge that comprehensive privacy protection requires addressing multiple risk vectors. Our approach provides one layer of protection against information leakage through published analyses, but does not address other important concerns such as centralized data collection risks, third-party model processing vulnerabilities, or inference attacks on the underlying conversation data. Future work should explore how formal DP guarantees can be integrated with other privacy-preserving techniques (such as federated learning, secure multi-party computation, or local DP) to create more comprehensive privacy frameworks for conversation analysis systems.

% Through this work, we have demonstrated a practical approach to differentially private conversation summarization that protects individual privacy while enabling meaningful analysis of large conversation datasets.

%\section*{Acknowledgments}
%Use unnumbered first level headings for the acknowledgments. All
%acknowledgments, including those to funding agencies, go at the end of the paper.

\newpage

% \section*{Ethics Statement}
% Authors can add an optional ethics statement to the paper.
% For papers that touch on ethical issues, this section will be evaluated as part of the review process. The ethics statement should come at the end of the paper. It does not count toward the page limit, but should not be more than 1 page.
% \pritish{Do we want to add a short ethics statement?}
% \sasha{I think we don't have much to say so I would suggest NO :D}

\bibliography{refs}

\begin{thebibliography}{34}
\providecommand{\natexlab}[1]{#1}
\providecommand{\url}[1]{\texttt{#1}}
\expandafter\ifx\csname urlstyle\endcsname\relax
  \providecommand{\doi}[1]{doi: #1}\else
  \providecommand{\doi}{doi: \begingroup \urlstyle{rm}\Url}\fi

\bibitem[Balcan et~al.(2017)Balcan, Dick, Liang, Mou, and Zhang]{BalcanDLMZ17}
Maria{-}Florina Balcan, Travis Dick, Yingyu Liang, Wenlong Mou, and Hongyang Zhang.
\newblock Differentially private clustering in high-dimensional {E}uclidean spaces.
\newblock In \emph{ICML}, pp.\  322--331, 2017.

\bibitem[Blum et~al.(2005)Blum, Dwork, McSherry, and Nissim]{blum2005practical}
Avrim Blum, Cynthia Dwork, Frank McSherry, and Kobbi Nissim.
\newblock Practical privacy: the sulq framework.
\newblock In \emph{PODS}, pp.\  128--138, 2005.

\bibitem[Carlini et~al.(2022)Carlini, Chien, Nasr, Song, Terzis, and Tramer]{carlini2022membership}
Nicholas Carlini, Steve Chien, Milad Nasr, Shuang Song, Andreas Terzis, and Florian Tramer.
\newblock Membership inference attacks from first principles.
\newblock In \emph{S \& P}, pp.\  1897--1914, 2022.

\bibitem[Chang \& Kamath(2021)Chang and Kamath]{chang21practical}
Alisa Chang and Pritish Kamath.
\newblock Practical differentially private clustering.
\newblock \url{https://research.google/blog/practical-differentially-private-clustering/}, October 2021.
\newblock Google Research Blog, Published October 21, 2021.

\bibitem[Chang et~al.(2021)Chang, Ghazi, Kumar, and Manurangsi]{ChangGKM21}
Alisa Chang, Badih Ghazi, Ravi Kumar, and Pasin Manurangsi.
\newblock Locally private $k$-means in one round.
\newblock In \emph{ICML}, pp.\  1441--1451, 2021.

\bibitem[{ChatGPT}()]{chatgpt}
{ChatGPT}.
\newblock {Open AI}.
\newblock URL \url{https://chatgpt.com/}.

\bibitem[{Claude.ai}()]{claude_ai}
{Claude.ai}.
\newblock {Anthropic}.
\newblock URL \url{https://claude.ai/}.

\bibitem[{DeepSeek}()]{deepseek}
{DeepSeek}.
\newblock {DeepSeek}.
\newblock URL \url{https://www.deepseek.com/}.

\bibitem[Desfontaines(2021)]{desfontainesblog21realworld}
Damien Desfontaines.
\newblock A list of real-world uses of differential privacy.
\newblock \url{https://desfontain.es/blog/real-world-differential-privacy.html}, 10 2021.
\newblock Ted is writing things (personal blog).

\bibitem[Desfontaines et~al.(2022)Desfontaines, Voss, Gipson, and Mandayam]{desfontaines22differentially}
Damien Desfontaines, James Voss, Bryant Gipson, and Chinmoy Mandayam.
\newblock Differentially private partition selection.
\newblock \emph{PoPETS}, 2022\penalty0 (1):\penalty0 339--352, 2022.

\bibitem[Dwork et~al.(2006)Dwork, McSherry, Nissim, and Smith]{dwork06calibrating}
Cynthia Dwork, Frank McSherry, Kobbi Nissim, and Adam~D. Smith.
\newblock Calibrating noise to sensitivity in private data analysis.
\newblock In \emph{TCC}, pp.\  265--284, 2006.

\bibitem[{Gemini}()]{gemini}
{Gemini}.
\newblock {Google}.
\newblock URL \url{https://gemini.google.com/}.

\bibitem[Ghazi et~al.(2020)Ghazi, Kumar, and Manurangsi]{GKM20}
Badih Ghazi, Ravi Kumar, and Pasin Manurangsi.
\newblock Differentially private clustering: Tight approximation ratios.
\newblock In \emph{NeurIPS}, 2020.

\bibitem[Ghosh et~al.(2012)Ghosh, Roughgarden, and Sundararajan]{GRS12:discrete-laplace}
Arpita Ghosh, Tim Roughgarden, and Mukund Sundararajan.
\newblock Universally utility-maximizing privacy mechanisms.
\newblock \emph{SICOMP}, 41\penalty0 (6):\penalty0 1673--1693, 2012.

\bibitem[Grootendorst(2020)]{grootendorst2020keybert}
Maarten Grootendorst.
\newblock {KeyBERT}: Minimal keyword extraction with {BERT}., 2020.
\newblock URL \url{https://doi.org/10.5281/zenodo.4461265}.

\bibitem[Grootendorst(2022)]{grootendorst2022bertopic}
Maarten Grootendorst.
\newblock {BERTopic:} neural topic modeling with a class-based {TF-IDF} procedure.
\newblock \emph{arXiv}, 2203.05794, 2022.

\bibitem[Gupta et~al.(2010)Gupta, Ligett, McSherry, Roth, and Talwar]{GuptaLMRT10}
Anupam Gupta, Katrina Ligett, Frank McSherry, Aaron Roth, and Kunal Talwar.
\newblock Differentially private combinatorial optimization.
\newblock In \emph{SODA}, pp.\  1106--1125, 2010.

\bibitem[Handa et~al.(2025)Handa, Tamkin, McCain, Huang, Durmus, Heck, Mueller, Hong, Ritchie, Belonax, Troy, Amodei, Kaplan, Clark, and Ganguli]{handa25economic}
Kunal Handa, Alex Tamkin, Miles McCain, Saffron Huang, Esin Durmus, Sarah Heck, Jared Mueller, Jerry Hong, Stuart Ritchie, Tim Belonax, Kevin~K. Troy, Dario Amodei, Jared Kaplan, Jack Clark, and Deep Ganguli.
\newblock {Which Economic Tasks are Performed with AI? Evidence from Millions of Claude Conversations}.
\newblock \emph{arXiv}, 2503.04761, 2025.

\bibitem[Ikotun et~al.(2023)Ikotun, Ezugwu, Abualigah, Abuhaija, and Jia]{IkotunEAAJ23}
Abiodun~M. Ikotun, Absalom~E. Ezugwu, Laith Abualigah, Belal Abuhaija, and Heming Jia.
\newblock K-means clustering algorithms: {A} comprehensive review, variants analysis, and advances in the era of big data.
\newblock \emph{Inf. Sci.}, 622:\penalty0 178--210, 2023.

\bibitem[Lloyd(1982)]{lloyd1982least}
Stuart Lloyd.
\newblock Least squares quantization in {PCM}.
\newblock \emph{TOIT}, 28\penalty0 (2):\penalty0 129--137, 1982.

\bibitem[Maragheh et~al.(2023)Maragheh, Fang, Irugu, Parikh, Cho, Xu, Sukumar, Patel, Korpeoglu, Kumar, et~al.]{maragheh2023llm}
Reza~Yousefi Maragheh, Chenhao Fang, Charan~Chand Irugu, Parth Parikh, Jason Cho, Jianpeng Xu, Saranyan Sukumar, Malay Patel, Evren Korpeoglu, Sushant Kumar, et~al.
\newblock {LLM-TAKE:} theme-aware keyword extraction using large language models.
\newblock In \emph{BigData}, 2023.

\bibitem[Nissim et~al.(2007)Nissim, Raskhodnikova, and Smith]{nissim2007smooth}
Kobbi Nissim, Sofya Raskhodnikova, and Adam Smith.
\newblock Smooth sensitivity and sampling in private data analysis.
\newblock In \emph{STOC}, pp.\  75--84, 2007.

\bibitem[Reimers \& Gurevych(2019)Reimers and Gurevych]{reimers2019sentence}
Nils Reimers and Iryna Gurevych.
\newblock {Sentence-BERT}: Sentence embeddings using {Siamese BERT}-networks.
\newblock \emph{arXiv}, 1908.10084, 2019.

\bibitem[Reimers \& Gurevych(2022)Reimers and Gurevych]{reimers2022allmpnet}
Nils Reimers and Iryna Gurevych.
\newblock all-mpnet-base-v2: {MPNet}-based sentence embedding model, 2022.
\newblock URL \url{https://huggingface.co/sentence-transformers/all-mpnet-base-v2}.

\bibitem[Sablayrolles et~al.(2019)Sablayrolles, Douze, Ollivier, Schmid, and J{\'e}gou]{sablayrolles2019white}
Alexandre Sablayrolles, Matthijs Douze, Yann Ollivier, Cordelia Schmid, and Herv{\'e} J{\'e}gou.
\newblock White-box vs black-box: Bayes optimal strategies for membership inference.
\newblock In \emph{ICML}, 2019.

\bibitem[Shokri et~al.(2017)Shokri, Stronati, Song, and Shmatikov]{shokri2017membership}
Reza Shokri, Marco Stronati, Congzheng Song, and Vitaly Shmatikov.
\newblock Membership inference attacks against machine learning models.
\newblock In \emph{S \& P}, 2017.

\bibitem[Sweeney(2002)]{sweeney02kanon}
Latanya Sweeney.
\newblock k-anonymity: {A} model for protecting privacy.
\newblock \emph{Int. J. Uncertain. Fuzziness Knowl. Based Syst.}, 10\penalty0 (5):\penalty0 557--570, 2002.

\bibitem[Tamkin et~al.(2024)Tamkin, McCain, Handa, Durmus, Lovitt, Rathi, Huang, Mountfield, Hong, Ritchie, Stern, Clarke, Goldberg, Sumers, Mueller, McEachen, Mitchell, Carter, Clark, Kaplan, and Ganguli]{tamkin24clio}
Alex Tamkin, Miles McCain, Kunal Handa, Esin Durmus, Liane Lovitt, Ankur Rathi, Saffron Huang, Alfred Mountfield, Jerry Hong, Stuart Ritchie, Michael Stern, Brian Clarke, Landon Goldberg, Theodore~R. Sumers, Jared Mueller, William McEachen, Wes Mitchell, Shan Carter, Jack Clark, Jared Kaplan, and Deep Ganguli.
\newblock {Clio: Privacy-Preserving Insights into Real-World AI Use}.
\newblock \emph{arXiv}, 2412.13678, 2024.

\bibitem[Tsfadia et~al.(2022)Tsfadia, Cohen, Kaplan, Mansour, and Stemmer]{TsfadiaCKMS22}
Eliad Tsfadia, Edith Cohen, Haim Kaplan, Yishay Mansour, and Uri Stemmer.
\newblock {FriendlyCore}: Practical differentially private aggregation.
\newblock In \emph{ICML}, pp.\  21828--21863, 2022.

\bibitem[Vadhan(2017)]{vadhan17complexity}
Salil~P. Vadhan.
\newblock The complexity of differential privacy.
\newblock In \emph{Tutorials on the Foundations of Cryptography}, pp.\  347--450. Springer International Publishing, 2017.

\bibitem[Wang et~al.(2024)Wang, Sha, Lin, Feng, Zhu, Wang, Jiao, Huang, Ye, He, et~al.]{wang2024one}
Yang Wang, Zheyi Sha, Kunhai Lin, Chaobing Feng, Kunhong Zhu, Lipeng Wang, Xuewu Jiao, Fei Huang, Chao Ye, Dengwu He, et~al.
\newblock One-step reach: {LLM}-based keyword generation for sponsored search advertising.
\newblock In \emph{The Web Conference (Companion)}, 2024.

\bibitem[Yeom et~al.(2018)Yeom, Giacomelli, Fredrikson, and Jha]{yeom2018privacy}
Samuel Yeom, Irene Giacomelli, Matt Fredrikson, and Somesh Jha.
\newblock Privacy risk in machine learning: Analyzing the connection to overfitting.
\newblock In \emph{CSF}, 2018.

\bibitem[Zhao et~al.(2024)Zhao, Ren, Hessel, Cardie, Choi, and Deng]{zhao2024wildchat}
Wenting Zhao, Xiang Ren, Jack Hessel, Claire Cardie, Yejin Choi, and Yuntian Deng.
\newblock Wildchat: 1m chatgpt interaction logs in the wild.
\newblock \emph{arXiv}, 2405.01470, 2024.

\bibitem[Zheng et~al.(2023)Zheng, Chiang, Sheng, Zhuang, Wu, Zhuang, Lin, Li, Li, Xing, et~al.]{zheng2023judging}
Lianmin Zheng, Wei-Lin Chiang, Ying Sheng, Siyuan Zhuang, Zhanghao Wu, Yonghao Zhuang, Zi~Lin, Zhuohan Li, Dacheng Li, Eric Xing, et~al.
\newblock Judging {LLM}-as-a-judge with {MT}-bench and chatbot arena.
\newblock In \emph{NeurIPS}, 2023.

\end{thebibliography}
\bibliographystyle{colm2025_conference}

\newpage

\appendix

\section{Formal Details of DP Tools}\label{apx:dp-tools}

In this section we recall basic properties of DP as well as formally describe the tools of private partition and private histogram release as alluded to in \secref{sec:prelims}.

\begin{proposition}[Properties of DP]\label{prop:dp-properties}
\mbox{}
\begin{itemize}
\item {\bf [Parallel Composition]}
    If $\cM : \Omega^* \to \cR$ satisfies $(\eps, \delta)$-DP, then for any disjoint collection of subsets $\Omega_1, \ldots, \Omega_r \subseteq \Omega$, given a dataset $D \subseteq \Omega$, the mechanism that returns $(\cM(D \cap \Omega_1), \ldots, \cM(D \cap \Omega_r))$ also satisfies $(\eps, \delta)$-DP.
\item {\bf [Basic Composition]}
    If mechanism $\cM_1$ with output space $Y$ satisfies $(\eps_1, \delta_1)$-DP and 
    the mechanism $\cM_{2, y}$ satisfies $(\eps_2, \delta_2)$-DP for all $y \in Y$. 
    Then the mechanism $\cM$ that maps $\bx$ to $(y_1 \sim \cM_1(\bx), y_2 \sim \cM_{2, y_1}(\bx))$ satisfies 
    $(\eps_1 + \eps_2, \delta_1 + \delta_2)$-DP.
\item {\bf [Postprocessing]} If $\cM$ with output space $Y$ satisfies $(\eps, \delta)$-DP, then for any function $f : Y \to Z$, the mechanism $\cM'$ that maps $\bx \mapsto f(M(\bx))$ satisfies $(\eps, \delta)$-DP.
\end{itemize}
\end{proposition}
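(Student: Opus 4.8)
The common engine for both composition properties is the expectation (or ``function'') form of the DP inequality: if $\cM$ is $(\eps,\delta)$-DP and $g$ maps the output space into $[0,1]$, then $\Ex[g(\cM(\bx))] \le e^{\eps}\,\Ex[g(\cM(\bx'))] + \delta$ for all adjacent $\bx \sim \bx'$. I would establish this first via the layer-cake identity $\Ex[g(\cM(\bx))] = \int_0^1 \Pr[g(\cM(\bx)) > t]\,dt$ (valid since $g \le 1$), applying \Cref{def:dp} to each superlevel event $\{g(\cM(\bx)) > t\}$ and integrating the bound over $t \in [0,1]$; the additive $\delta$ survives intact because $\int_0^1 \delta\,dt = \delta$. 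Postprocessing I would then dispatch immediately: for a function $f : Y \to Z$ and any measurable $E \subseteq Z$, the preimage $f^{-1}(E)$ is a measurable event in $Y$, so $\Pr[f(\cM(\bx)) \in E] = \Pr[\cM(\bx) \in f^{-1}(E)]$ and \Cref{def:dp} applied to $f^{-1}(E)$ gives the claim at once (for a randomized $f$ one averages over its independent internal coins).

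For parallel composition, the key structural fact is that adjacency is the addition or removal of a \emph{single} record, and the blocks $\Omega_1,\ldots,\Omega_r$ are disjoint, so that record lies in at most one block. Hence for adjacent $D \sim D'$ there is a single index $i^\star$ with $D \cap \Omega_{i^\star} \sim D' \cap \Omega_{i^\star}$, while $D \cap \Omega_i = D' \cap \Omega_i$ for every $i \ne i^\star$ (a record outside all blocks changes nothing). I would then use the independence of the per-block invocations of $\cM$: conditioning on the outputs of the $r-1$ unchanged coordinates, whose joint distribution is identical under $D$ and $D'$, reduces any measurable event on the product range to its slice in the $i^\star$ coordinate, to which I apply the $(\eps,\delta)$-DP of $\cM$, and then integrate back over the unchanged coordinates. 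The only care needed is the Fubini/conditioning step for a general (non-product) measurable event, which the independence of the coordinates renders routine.

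For basic composition I would write, for adjacent $\bx \sim \bx'$ and any measurable $E \subseteq Y \times Z$, the identity $\Pr[\cM(\bx) \in E] = \int_Y a(y)\,dP_1(y)$, where $P_1 = \cM_1(\bx)$, the slice is $E_y = \{z : (y,z) \in E\}$, and $a(y) = \Pr[\cM_{2,y}(\bx) \in E_y]$; set also $b(y) = \Pr[\cM_{2,y}(\bx') \in E_y]$ and $Q_1 = \cM_1(\bx')$. The naive route—apply the second mechanism's bound pointwise, then the first mechanism's bound—produces a spurious cross term such as $e^{\eps_2}\delta_1$, and eliminating it is the main obstacle. I would resolve it through the decomposition $a = a_{\mathrm{good}} + a_{\mathrm{bad}}$ with $a_{\mathrm{good}}(y) = \min\{a(y),\, e^{\eps_2} b(y)\}$ and $a_{\mathrm{bad}}(y) = (a(y) - e^{\eps_2} b(y))_+$: by $(\eps_2,\delta_2)$-DP of $\cM_{2,y}$ the residual is pointwise bounded, $a_{\mathrm{bad}}(y) \le \delta_2$, so $\int a_{\mathrm{bad}}\,dP_1 \le \delta_2$ against the probability measure $P_1$, while $a_{\mathrm{good}} \in [0,1]$ lets me invoke the expectation form of $(\eps_1,\delta_1)$-DP for $\cM_1$, giving $\int a_{\mathrm{good}}\,dP_1 \le e^{\eps_1}\int a_{\mathrm{good}}\,dQ_1 + \delta_1 \le e^{\eps_1+\eps_2}\int b\,dQ_1 + \delta_1$. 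Adding the two pieces yields $\Pr[\cM(\bx) \in E] \le e^{\eps_1+\eps_2}\Pr[\cM(\bx') \in E] + \delta_1 + \delta_2$, exactly as claimed; the clean additivity of the $\delta$'s is precisely what the good/bad split buys over the naive argument.
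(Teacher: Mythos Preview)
Your proof plan is correct in all three parts; the expectation form via layer-cake, the preimage argument for postprocessing, the ``single affected block'' reduction for parallel composition, and the good/bad split for basic composition are all standard and sound, and the good/bad decomposition is exactly the right device to avoid the spurious $e^{\eps_2}\delta_1$ cross term.

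There is, however, nothing to compare against: the paper does not prove \Cref{prop:dp-properties}. It is stated in the appendix under the heading ``we recall basic properties of DP'' and used as a black box in the proof of the main privacy theorem, with no argument supplied. So your write-up is strictly more than what the paper itself provides; it supplies a self-contained justification where the paper relies on the reader's familiarity with the DP literature.
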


 The (truncated) discrete Laplace distribution
$\DLap_\tau(\eps)$ is the distribution on $\Z$ such that%
%\sasha{For practical purposes, one can just sample from \href{https://docs.scipy.org/doc/scipy/reference/generated/scipy.stats.dlaplace.html}{\tt scipy.stats.dlaplace} and truncate to $\log(1 / \delta) / \eps$.}
\[
    \Pr_{X \sim \DLap_\tau(\eps)}[X = x] = 
    \begin{cases}
        \beta_{\eps,\tau} \cdot e^{-\eps |x|} & \text{for } z \in [-\tau, \tau]\\
        0 & \text{otherwise}
    \end{cases},
    \qquad \text{where }
    \textstyle \beta_{\eps, \tau} := \frac{1 - e^{-\eps}}{1 + e^{-\eps} - 2 e^{-\eps (\tau + 1)}}
\]

The case of $\tau=\infty$ is simply the discrete Laplace distribution $\DLap(\eps)$.

\subsection{Private Histogram Release}
The mechanism $\texttt{PHR}_{k,\eps}(h_1, \ldots, h_n)$ simply returns a noisy histogram obtained by adding discrete Laplace noise $\DLap(\eps/k)$ to each value of the true histogram.

\begin{algorithm}
\SetKwInput{KwParams}{Parameters}
\KwParams{$\eps > 0$ and $k \in \N$.}
\KwIn{$h_1, \dots, h_n \subseteq \cB$ with $|h_i| \le k$.}
\KwOut{A histogram $\bar{h} \in \Z^{\cB}$}
\BlankLine

$h^* \gets \sum_i h_i$ \tcp*{$h_i$ interpreted as indicator vector in $\{0, 1\}^{\cB}$.}
%$c_x \gets |\{i ~:~ x_i = x\}|$ for all $x \in D$

\For{$b \in \cB$}{
    Let $\zeta_b \sim \DLap(\eps / k)$
    
    $\bar{h}_b \gets h^*_b + \zeta_b$
}
\Return{$\bar{h}$}
\caption{Private Histogram Release $\texttt{PHR}_{k, \eps}(x_1, \dots, x_n)$}
\label{alg:private_histograms}
\end{algorithm}

\subsection{Private Partition Selection}
% The optimal algorithm solving Private Partition Selection 
% is based on 

The mechanism $\texttt{PPS}_{k,\eps,\delta}(h_1, \ldots, h_n)$ returns a subset of elements for which the noisy count, after adding truncated discrete Laplace noise $\DLap_\tau(\eps/k)$ is more than $\tau$. The result for $k=1$ was provided by \cite{desfontaines22differentially} and the result for general $k$ can be derived using the {\em group privacy} property of DP~(see Lemma 2.2 in \cite{vadhan17complexity}).
%\pritish{Add citation for group privacy if that is what we are using.}\pritish{Are we applying group privacy correctly?}%\pritishinfo{For now I attributing the $k$ sized subset version of partition selection to \cite{desfontaines22differentially} itself. This if fine for now, but should be fixed eventually.}
%noisy histogram obtained by adding discrete Laplace noise $\DLap(\eps)$ to each value of the true histogram.

\begin{algorithm}
    \SetKwInput{KwParams}{Parameters}
    \KwParams{$k \in \N$, $\eps > 0$ and $\delta \in [0, 1]$.}
    \KwIn{$h_1, \dots, h_n \subseteq \Omega$ with $|h_i| \le k$.}
    %\KwIn{Elements $x_1, \dots, x_n \in \Omega$ of the multiset.}
    \KwOut{A set $R \subseteq \bigcup_i h_i$.}
    \BlankLine
    $\eps', \delta' \gets \eps \cdot \frac{1}{k}, \delta \cdot \frac{e^{\eps/k} - 1}{e^{\eps} - 1}$
    
    $\tau \gets \lceil \frac{1}{\eps'} \log(\frac{e^{\eps'} + 2 \delta' - 1}{(e^{\eps'} + 1) \delta'}) \rceil$

    $R \gets \emptyset$
    
    \For{$\omega \in \bigcup_i h_i$}{
        $c_\omega \gets |\{i ~:~ h_i \ni \omega\}|$
        
        $\zeta_\omega \sim \DLap_\tau(\eps)$
        
        \If{$c_\omega + \zeta_\omega > \tau$}{
            $R \gets R \cup \{\omega\}$
        }
    }
    \Return{$R$}
    \caption{Private Partition Selection $\texttt{PPS}_{k, \eps, \delta}(x_1, \dots, x_n)$}
    \label{alg:partition_selection}
\end{algorithm}

\newpage
\section{Description of \texorpdfstring{$\SimplifiedClio$}{Simple-Clio}}\label{apx:simple-clio-alg}
We formally describe the $\SimplifiedClio$ algorithm in \Cref{alg:non-dp-pipeline} complementing the description in \secref{sec:non-dp-framework}.
\begin{algorithm}[h]
\SetAlgoLined
\KwParams{$k$ : parameter for $\mathsf{KMeans}$ clustering}
\KwIn{Input dataset $\bx = (x_1, x_2, \ldots, x_n) \in \cX^n$}
\KwOut{High Level Summaries $\bs = (s_1, \ldots, s_k) \in \cS^k$}
\BlankLine

\tcp{Step 1: Extract Embeddings from Conversations}
$\be \leftarrow \{\ExtractEmbeddings(x_i) : i \in [n]\}$

\tcp{Step 2: Clustering (K-means)}
$\bc, \bC \leftarrow \KMeans_k(\be)$ \tcp*{$\bc$ are centers, $\bC$ are assignments}

\tcp{Step 3: Summarize each Cluster}
$\bs \leftarrow \emptyset$ \tcp*{Initialize summaries set}
\For{$j \leftarrow 1$ \KwTo $k$}{
    $\mathbf{samples}_j \leftarrow \SampleConversations(\bc_j,\bC_j,\bx)$ \tcp*{Draw representative samples based on the center and cluster}
    $s_j \leftarrow \LLMSummarize(\mathbf{samples}_j)$ \tcp*{Generate summary using LLM}
    $\bs \leftarrow \bs \cup \{s_j\}$
}

\Return{$\bs$}
\caption{$\SimplifiedClio_k$ : Non-Differentially Private Text Corpora Summarization}
\label{alg:non-dp-pipeline}
\end{algorithm}

The sub-routines used in the above algorithm operate as follows:
\begin{itemize}
    \item $\ExtractEmbeddings(x)$ uses an LLM to create a ``summary'' for $x$ (see prompt in \secref{apx:embeddings_prompt}), and thereafter uses an embedding model that maps the summary to a real-valued vector. Specifically, we use \texttt{all-mpnet-base-v2} embedding model~\citep{reimers2019sentence,reimers2022allmpnet}.
    \item $\KMeans_k(\be)$ applies the $k$-means algorithm that partitions the vectors $\be$ into $k$ clusters that minimizes the $k$-means objective.
    \item $\SampleConversations$ draws a small (e.g., $10$)
    %\pritishinfo{I looked this number in the code \href{https://source.cloud.google.com/algo-privacy-temp-apikey/dp_clio/+/master:summarize_clustering.py;l=152;drc=42373eb2279be95a676f3925b43cf1b87a396e2f}{(link)}.}
    number of random conversations from the specified cluster.
    \item $\LLMSummarize$ uses an LLM to generate a summary associated to the cluster using the sampled conversations from the cluster (see prompt in \secref{apx:llm_summarize_prompt}).
\end{itemize}

\newpage
\section{Keyword Selection Algorithms}\label{apx:kwset-algs}

We formally describe the methods for extract keywords from documents within each cluster, complementing the description in \secref{subsec:keywords-generation}. In these methods, we apply the extraction on the concatenation of the various facets of the inputs $x_i$'s. However, for simplicity, we write the pseudocode assuming they are applied on the raw inputs.

\paragraph{TF-IDF-Based Selection with Partition Selection (\KwsetTFIDF).}
This method proceeds by constructing a {token frequency} matrix $\mathbf{TF} \in \Z^{n \times |V|}$ where $\mathbf{TF}_{i,j}$ is the number of times the token $j$ appears in input $x_i$. We normalize the token frequency matrix such that each column has $\ell_1$ norm at most $w_{\max}$.

Additionally, we construct a {\em document frequency} vector $\mathbf{df} \in \Z^{|V|}$, where $\mathbf{df}_j$ is the number of times the token $j$ appears across all documents. This is privatized to obtain $\widetilde{\mathbf{df}}$ by adding discrete Laplace noise. And similarly, the number of documents is estimated as $\tilde{n}$ by adding discrete Laplace noise to $n$.

The {\em inverse document frequency} $\mathbf{idf}$ is set as $\log(\tilde{n} / \widetilde{\mathbf{df}})$. And finally, the $\mathbf{TFIDF}$ matrix is constructed by multiplying each column of the $\mathbf{TF}$ matrix with the $\mathbf{idf}$ vector.

% \paragraph{LLM-Based Selection with Partition Selection (\KwsetLLM).} 
% \paragraph{Iterative NLP-based Refinement (\KwsetPublic).}
% \paragraph{Combined Public-Private Selection (\KwsetHybrid).}

\newcommand{\epsdf}{\eps_{\mathrm{df}}}
\newcommand{\epssel}{\eps_{\mathrm{sel}}}
\newcommand{\deltasel}{\delta_{\mathrm{sel}}}

\newcommand{\epsidf}{\eps_{\mathrm{idf}}}

\begin{algorithm}[h]
\small
\SetAlgoLined
\KwParams{Maximum keywords per conversation $k$, maximum document weight $w_{\max}$, maximum total keywords $K$.}
\SetKwInput{KwParams}{Parameters}
\KwParams{Privacy parameters $\epsidf, \epssel > 0$, $\deltasel \in [0, 1]$.}
\KwIn{Input dataset $\bx = (x_1, x_2, \ldots, x_n)$ of structured facets}
\KwOut{Differentially private keyword set $\mathcal{K}$}
\BlankLine

% \tcp{Step 1: Extract and preprocess text}
% $\mathbf{T} \leftarrow \{\textsf{Preprocess}(\textsf{Join}(f.\text{fields})) : f \in \mathbf{F}\}$

\tcp{Step 1: Build TF matrix and apply L1 clipping}
% $\mathbf{TF} \leftarrow \textsf{CountVectorizer}(\mathbf{T})$ \tcp*{Build term frequency matrix}
$\mathbf{TF}, \mathcal{V} \leftarrow \textsf{CountVectorizer}(\bx)$ \tcp*{Build term frequency matrix and token list}

\For{$i \leftarrow 1$ \KwTo $n$}{
    \If{$\|\mathbf{TF}_{i,:}\|_1 > w_{\max}$}{
        $\mathbf{TF}_{i,:} \leftarrow \mathbf{TF}_{i,:} \cdot \frac{w_{\max}}{\|\mathbf{TF}_{i,:}\|_1}$ \tcp*{Clip document weights}
    }
}

\tcp{Step 3: Compute noisy document frequencies}
$\mathbf{df} \leftarrow \sum_{i=1}^{n} \mathbf{TF}_{i,j}$ for $j \in |\mathcal{V}|$ \tcp*{Compute document frequencies}

$\sigma \leftarrow \frac{\epsidf}{w_{\max} + 1}$ \tcp*{Noise scale}

$\tilde{\mathbf{df}} \leftarrow \max(\mathbf{df} + \textsf{DLaplace}(\sigma), 1)$ \tcp*{Add DP noise}

$\tilde{n} \leftarrow n + \textsf{DLaplace}(\sigma)$ \tcp*{Noisy document count}

\tcp{Step 4: Compute DP TF-IDF and extract keywords}
$\mathbf{idf}^{DP} \leftarrow \log(\tilde{n} / \tilde{\mathbf{df}})$ \tcp*{Compute DP IDF}

$\mathbf{TFIDF}^{DP} \leftarrow \mathbf{TF} \odot \mathbf{idf}^{DP}$ \tcp*{Compute DP TF-IDF}

% $\mathbf{K}_{\text{all}} \leftarrow \bigcup_{i=1}^{n} \textsf{TopK}(\mathbf{TFIDF}^{DP}[i,:], m)$ \tcp*{Extract top $m$ keywords per conversation}

$h_i \gets$ top $k$ tokens from $\mathbf{TF}_{i,:}$ for each $i \in [n]$ \tcp*{Extract top $k$ keywords per conversation}

\tcp{Step 5: Apply DP selection mechanism}
$\cK \gets \mathsf{PPS}_{k,\epssel,\deltasel}(h_1, \ldots, h_n)$
% $\mathcal{K}^{DP} \leftarrow \textsf{DPSelector}_{\epssel, \deltasel}(\mathbf{K}_{\text{all}})$ 
\tcp*{Select final keywords with DP}

\tcp{Step 6: Limit to maximum keywords (optional)}
\If{$|\cK| > K$}{
    $\cK \leftarrow \textsf{TopK}(\mathcal{K}, K, \text{by original counts})$
}

\tcp{Step 7: LLM refinement}
% \If{\textsf{refine} = \text{True}}{
$\mathcal{K} \leftarrow \textsf{LLMRefineKeywords}(\mathcal{K}, K)$ \tcp*{See prompt in \secref{apx:prompt-kwset-llm}}
% }

\Return{$\mathcal{K}$}
% \caption{$\textsf{DPTFIDF}_{m, w_{\max}, \epsidf, \epssel, \deltasel}$: Differentially Private TF-IDF Keyword Extraction}
\caption{Differentially Private TF-IDF Keyword Extraction (\KwsetTFIDF)}
\label{alg:kwset-tfidf}
% \label{alg:dp-tfidf-combined}
\end{algorithm}

\begin{algorithm}[t]
\small
\SetAlgoLined
\KwParams{Maximum keywords per conversation $m$, maximum total keywords $K$.}
\SetKwInput{KwParams}{Parameters}
\KwParams{Privacy parameters $\epssel > 0$, $\deltasel \in (0, 1]$.}
\KwIn{Input dataset $\bx = (x_1, x_2, \ldots, x_n) \in \cX^n$}
% \KwIn{Input dataset $\mathbf{F} = (f_1, f_2, \ldots, f_n)$ of structured facets with keywords}
\KwOut{Differentially private keyword set $\cK_{\text{refined}}$}
\BlankLine

\tcp{Step 1: Extract a set of at most $m$ keywords $h_i$ for each $x_i$.}
\For{$i = 1, \ldots, n$}{
    $h_i \gets \mathsf{GetKeywords}(x_i)$ \tcp*{See prompt template in \secref{apx:prompt-kwset-llm}.}
}
% $\mathbf{counts} \leftarrow \textsf{Counter}\left(\bigcup_{f \in \mathbf{F}} f.keywords[1:m]\right)$ \tcp*{Count keywords, max $m$ per facet}

\tcp{Step 2: Apply DP selection mechanism}
$\cK \gets \mathsf{PPS}_{k,\epssel,\deltasel}(h_1, \ldots, h_n)$
% $\tau \leftarrow \frac{m \cdot \log(1/\deltasel)}{\epssel}$ \tcp*{Compute DP threshold}

% $\mathcal{K}^{DP} \leftarrow \{k : (k,c) \in \mathbf{counts}, c + \textsf{DLaplace}(\epssel/m) \geq \tau, c > \theta\}$

\tcp{Step 3: LLM refinement}
%\If{\textsf{refine} = \text{True}}{
\While{$|\cK| > K$}{
    % $\cK_0 \leftarrow$ arbitrary subset of $\cK$ of size $B$
    % $\mathbf{K}_{batch} \leftarrow \textsf{Sort}(\mathcal{K}^{DP})[1:\text{MAX\_BATCH}]$ \tcp*{Take sorted batch}
    
    $\cK_{\text{refined}} \leftarrow \textsf{LLMQuery}(\textsf{RefinePrompt}(\cK))$ \tcp*{LLM refinement (see \secref{apx:prompt-kwset-llm})}
    
    % $\cK \gets (\cK \setminus \cK_{0}) \cup \cK_{\text{refined}}$ \tcp*{Update keyword set}
}
% $\mathcal{K}^{DP} \leftarrow \mathcal{K}^{DP}[1:K]$ \tcp*{Limit to target size}
%}

\Return{$\cK_{\text{refined}}$}
% \caption{$\textsf{DPFacetKeywords}_{m, \theta, K, \epssel, \deltasel}$: DP Selection from Facet Keywords with Optional LLM Refinement}
\caption{DP Selection from Facet Keywords with LLM Refinement (\KwsetLLM)}
\label{alg:kwset-llm}
% \label{alg:dp-facet-keywords-unified}
\end{algorithm}

\begin{algorithm}[t]
\small
\SetAlgoLined
\KwParams{Initial batch size $n_0$, sequential batch size $b$, maximum initial keywords $K_0$.}
\SetKwInput{KwParams}{Parameters}
\KwIn{Input dataset $\mathbf{x} = (x_1, x_2, \ldots, x_n)$ of structured facets}
\KwOut{Refined keyword set $\mathcal{K}^{pub}$}
\BlankLine

\tcp{Step 1: Extract text and split dataset}
% $\mathbf{T} \leftarrow \{\textsf{GenerateCombinedText}(f) : f \in \mathbf{F}\}$ \tcp*{Extract summary, topics, keywords}

% $\mathbf{T}_{init} \leftarrow \mathbf{T}[1:n_0]$, $\mathbf{T}_{seq} \leftarrow \mathbf{T}[n_0+1:n]$ \tcp*{Split into initial and sequential sets}
$\bx_{\text{init}} \leftarrow \bx[1:n_0]$, $\bx_{\text{seq}} \leftarrow \bx[n_0+1:n]$ \tcp*{Split into initial and sequential sets}

\tcp{Step 2: Extract and refine initial keywords}
$\mathbf{K}_{raw} \leftarrow \bigcup_{x \in \bx_{init}} \{\textsf{NER}(x) \cup \textsf{NounChunks}(x) \cup \textsf{RAKE}(x)\}$ \tcp*{Multi-method extraction}

$\mathcal{K}^{pub} \leftarrow \textsf{LLMRefine}(\mathbf{K}_{raw})$ \tcp*{Remove redundancy, merge concepts (see \secref{apx:prompt-kwset-public})}

$\mathcal{K}^{pub} \leftarrow \textsf{RandomSample}(\mathcal{K}^{pub}, \min(|\mathcal{K}^{pub}|, K_0))$ \tcp*{Limit initial set}

\tcp{Step 3: Sequential updating with LLM feedback}
\For{$i \leftarrow 1$ \KwTo $\lceil |\bx_{seq}|/b \rceil$}{
    $\text{batch\_text} \leftarrow \textsf{Join}(\bx_{seq}[(i-1)b+1 : ib])$ \tcp*{Merge batch conversations}
    
    $(\mathbf{remove}, \mathbf{append}) \leftarrow \textsf{LLMUpdate}(\mathcal{K}^{pub}, \text{batch\_text})$ \tcp*{Get update suggestions (see \secref{apx:prompt-kwset-public})}
    
    $\mathcal{K}^{pub} \leftarrow (\mathcal{K}^{pub} \setminus \mathbf{remove}) \cup \mathbf{append}$ \tcp*{Apply updates}
    
    % \If{$i \bmod 20 = 0$}{
    %     $\textsf{SaveCheckpoint}(\mathcal{K}^{pub}, i)$ \tcp*{Periodic checkpoints}
    % }
}

\Return{$\mathcal{K}^{pub}$}
% \caption{$\textsf{SequentialKeywordUpdate}_{n_0, b, K_0}$: LLM-guided Sequential Keyword Refinement}
\caption{LLM-guided Sequential Keyword Refinement (\KwsetPublic)}
\label{alg:kwset-public}
% \label{alg:sequential-keyword-update-unified}
\end{algorithm}

\newcommand{\epshyb}{\epsilon_{\mathrm{hyb}}}
\newcommand{\deltahyb}{\delta_{\mathrm{hyb}}}

\begin{algorithm}[t]
\small
\SetAlgoLined
\KwParams{Maximum number of keywords $k$ per conversation.}
% \KwParams{Batch size $b$, maximum selected keywords $K$.}
\SetKwInput{KwParams}{Parameters}
\KwParams{Privacy parameters $\epshyb > 0$, $\deltahyb \in (0, 1]$.}
\KwIn{Public keyword set $\mathcal{K}^{pub}$, input dataset $\bx = (x_1, x_2, \ldots, x_n)$}
\KwOut{Hybrid DP keyword set $\mathcal{K}^{hyb}$}
\BlankLine

% \tcp{Step 1: Extract text from private conversations}
% $\mathbf{T} \leftarrow \{\textsf{ExtractText}(c) : c \in \mathbf{C}\}$ \tcp*{Extract meaningful text from JSON}

\tcp{Step 1: LLM-guided candidate extraction in batches}
% $\mathbf{candidates} \leftarrow \emptyset$ \tcp*{Initialize candidate keywords}

\For{$i = 1$ to $n$}{
    % $\mathbf{batch} \leftarrow \bx[(i-1)b+1 : ib]$ \tcp*{Get next batch of conversations}
    
    % $\text{batch\_text} \leftarrow \textsf{Join}(\mathbf{batch}[1:\min(10, |\mathbf{batch}|)])$ \tcp*{Limit prompt size}

    $h_i \gets \textsf{GetNovelKeywords}(\mathcal{K}^{pub}, x_i)$ \tcp*{Using prompt; see \secref{apx:prompt-kwset-hybrid}}
    % $\text{prompt} \leftarrow \textsf{CreateHybridPrompt}(\mathcal{K}^{pub}, \text{batch\_text})$ \tcp*{Create extraction prompt; see \secref{apx:prompt-kwset-hybrid}}
    
    % $\text{response} \leftarrow \textsf{LLMQuery}(\text{prompt})$ \tcp*{Query LLM for new keywords}
    
    % $\mathbf{new\_keywords} \leftarrow \textsf{ParseCommaSeparated}(\text{response})$ \tcp*{Extract comma-separated keywords}
    
    % $\mathbf{candidates} \leftarrow \mathbf{candidates} \cup \mathbf{new\_keywords}$ \tcp*{Accumulate candidates}
}

\tcp{Step 2: Partition selection to select keywords corresponding to private conversations}
$\cK_{\text{private}} \gets \mathsf{PPS}_{k, \eps, \delta}(h_1, \ldots, h_n)$

% \tcp{Step 3: Count candidate keyword frequencies}
% $\mathbf{counts} \leftarrow \textsf{Counter}(\mathbf{candidates})$ \tcp*{Count occurrences of each candidate}

% \tcp{Step 4: Apply DP partition selection}
% $\tau \leftarrow \lfloor \frac{1}{\epshyb} \log\left(\frac{e^{\epshyb} + 2\deltahyb - 1}{(e^{\epshyb} + 1) \deltahyb}\right) \rfloor$ \tcp*{Compute DP threshold}

% $\mathcal{K}^{selected} \leftarrow \emptyset$ \tcp*{Initialize selected keywords}

% \For{$(k, c) \in \mathbf{counts}$}{
%     $\xi \leftarrow \textsf{DiscreteLaplace}(\epshyb, \tau)$ \tcp*{Sample noise}
    
%     \If{$c + \xi > \tau$}{
%         $\mathcal{K}^{selected} \leftarrow \mathcal{K}^{selected} \cup \{k\}$ \tcp*{Accept if noisy count exceeds threshold}
%     }
% }

\Return{$\cK_{\text{private}} \cup \cK_{\text{pub}}$}

% \tcp{Step 5: Combine with public keywords}
% $\mathcal{K}^{hyb} \leftarrow \mathcal{K}^{pub} \cup \mathcal{K}^{selected}$ \tcp*{Merge public and private keywords}

% \If{$|\mathcal{K}^{hyb}| > K$}{
%     $\mathcal{K}^{hyb} \leftarrow \textsf{TopK}(\mathcal{K}^{hyb}, K, \text{by noisy counts})$ \tcp*{Limit to maximum size}
% }

% \Return{$\mathcal{K}^{hyb}$}
% \caption{$\textsf{HybridKeywordSelection}_{b, K, \epshyb, \deltahyb}$: Combined Public-Private DP Keyword Selection}
\caption{Combined Public-Private DP Keyword Selection (\KwsetHybrid)}
\label{alg:kwset-hybrid}
% \label{alg:hybrid-keyword-selection}
\end{algorithm}

\newpage
\mbox{}
\newpage
\mbox{}
\newpage
\section{Summary Comparison Examples}
\label{app:summary_examples}

To better understand the trade-offs between public and private summaries, we present several illustrative examples that highlight the key differences in specificity and contextual relevance.

\begin{table}[h]
\centering
\caption{Comparison of Public vs Private Summary Examples}
\begin{tabular}{p{\dimexpr\textwidth-2\tabcolsep}}
\toprule
% --- Example 1 ---
\textbf{CONVERSATION:} \\
The user is asking how to extract the last character of a file in BASH and convert it to hexadecimal. \\[0.5ex]
\textbf{SUMMARIES:} \\
\hspace*{1em} \textbf{Public:} Scripting and Command-Line Task Automation Requests \\
\hspace*{1em} \textbf{Private:} Software Development, Version Control, and Educational Resources \\
\midrule
% --- Example 2 ---
\textbf{CONVERSATION:} \\
The assistant explains the key differences between reinforcement learning and unsupervised learning, focusing on feedback and goals. \\[0.5ex]
\textbf{SUMMARIES:} \\
\hspace*{1em} \textbf{Public:} Deep Learning, Training, and Applications Explanation \\
\hspace*{1em} \textbf{Private:} AI Model Training, Deployment, and Management Considerations \\
\midrule
% --- Example 3 ---
\textbf{CONVERSATION:} \\
The user is asking if the assistant speaks Russian and the assistant confirms and offers help. \\[0.5ex]
\textbf{SUMMARIES:} \\
\hspace*{1em} \textbf{Public:} Russian Language Support and Communication \\
\hspace*{1em} \textbf{Private:} AI-Powered Applications and Open Source Technology \\
\bottomrule
\end{tabular}
\label{tab:summary_examples}
\end{table}

As these examples demonstrate, public summaries tend to be more specific and directly relevant to the actual conversation content, while private summaries are often broader and sometimes miss key contextual details. This limitation stems from the privacy constraints that prevent fine-grained keyword extraction—for instance, in the language example, specific terms like "Russian" may not appear in our predetermined keyword sets, leading to generic categorizations that fail to capture the conversation's essence.

These patterns suggest that while LLM evaluators may prefer the broader categorizations produced by our private pipeline for their apparent comprehensiveness, human evaluators might better discern the loss of specificity and contextual relevance. The trade-offs observed in these examples highlight the necessity of incorporating human-based evaluation in future work to provide a more nuanced assessment of summary quality, as human judgment may be more sensitive to the subtle but important differences in semantic accuracy and practical utility that automated metrics might overlook.

\newpage
\section{Additional Related Work}\label{apx:related}

\paragraph{Analysis of LLM conversations}
Our work builds on real-world conversations with LLM chatbots from recent open-source datasets 
\citep{ zheng2023judging, zhao2024wildchat}. 
Prior work has analyzed such conversations to extract structure and insight. 
For instance, \citet{zhao2024wildchat} define a set of common conversation categories and prompt LLMs to classify conversations accordingly.
In contrast, \citet{zheng2023judging} and \citet{tamkin24clio} adopt more flexible approaches that do not rely on predefined categories. 
\citet{zheng2023judging} cluster conversations in the embedding space and then use LLMs to summarize each cluster’s main theme.
\citet{tamkin24clio} build on this by first summarizing individual conversations into concise summaries and then applying hierarchical clustering. 
While these works provide valuable frameworks for understanding LLM behavior, none offer formal privacy guarantees.
Our work addresses this gap by introducing the first DP  conversation summarization pipeline.

\paragraph{Clustering.}
Both \cite{tamkin24clio} and our work employ a clustering step to group conversations with similar embeddings together. %Clustering is one of the most popular techniques in unsupervised learning, and $k$-means is 
We use $k$-means, which is one of the most popular formulations of clustering; see e.g.~\cite{IkotunEAAJ23} for a comprehensive survey. For non-DP $k$-means, we use the classic algorithm of \cite{lloyd1982least}. For DP $k$-means, several algorithms have been proposed, starting with \cite{blum2005practical} who devised a DP version of Lloyd's algorithm. This was followed up by a series of works improving different algorithmic guarantees (e.g.~\cite{nissim2007smooth,GuptaLMRT10,BalcanDLMZ17,GKM20}), culminating in the algorithm of \cite{ChangGKM21} which not only gives a nearly optimal (theoretical) approximation ratio but is also practical. We ended up using the open-source library~\cite{chang21practical} based on this paper.
% \pritish{@Pasin: Should we also mention Friendly Core?} 
We remark that there are also more recent works which provide practical DP clustering algorithms, e.g., ~\cite{TsfadiaCKMS22}; nevertheless, we are unaware of any open-source library based on these papers.

%We ended up using~\cite{chang21practical}, which is in turn based on the paper~\cite{ChangGKM21}, since it is the only scalable open-source DP clustering library we are aware of.

%\todoinline{Da to take a stab at the following topics.}
\paragraph{Membership inference attacks (MIA)}
% \pritish{Should we switch paragraph header to {\em empirical privacy evaluation} instead of MIA?} 
Our empirical privacy evaluation (\secref{sec:mia}) is in spirit similar to MIA, which aims to infer whether specific data points were included in the training data of a given model or algorithm, based on its outputs. 
While various MIA techniques have been proposed, they primarily target machine learning models 
\citep{shokri2017membership, yeom2018privacy, sablayrolles2019white, carlini2022membership}. 
In this work, we design an empirical privacy evaluation method tailored to attack the LLM conversation summarization pipeline.

\paragraph{Keyword generation with LLMs} 
Our keyword set generation method (\secref{subsec:keywords-generation}) is related to recent work on extracting 
keywords from text corpora using LLMs \citep{grootendorst2020keybert, maragheh2023llm, wang2024one}. 
These approaches typically fine-tune or prompt an LLM to extract keywords. 
However, such pipelines are not applicable in a DP setting, as documents may contain unique keywords that
could leak sensitive information.
In this work, we propose keyword generation methods that only output keywords shared across multiple conversations,
allowing us to provide formal DP guarantees.

\newpage
\section{Empirical Privacy Leakage Evaluation}\label{sec:mia}

%\mynote{Describe history of membership inference attacks. Add citations.}\pritish{@Chiyuan: Can you help with this?}

% \mynote{
% I designed a simple MIA attack by synthetic one sensitive conversation around health and medicine, and 99 conversations around food, travel, homework help, pet care and health.
% Then I run the public and DP pipeline on the synthetic conversations respectively. Then we get the embedding of the sensitive conversation and 5 (say public) summaries, and compute the maximum similarity between the sensitive conversation and the summaries. We use this to detect whether the sensitive conversation is used.
% }
%\subsection{Privacy Evaluation: Membership Inference Attack}

To empirically evaluate privacy protection, we conducted a simple experiment to compare the privacy leakage of our private and the public pipelines.
% \pritish{Maybe add pointer to additional related work here: \secref{apx:related}?}
Specifically, we created a synthetic dataset of 100 conversations: 1 sensitive conversation on health/medical topics and 99 non-sensitive conversations on general topics (food, travel, homework help and health). 
We ran both pipelines on this dataset, then quantify the privacy leakage by measuring the maximum embedding similarity between the sensitive conversation and the generated summaries. We consider a simple thresholding-based detector of the sensitive conversation, and measure the AUC under different thresholds under 200 runs where half of the runs include the sensitive conversation.

\begin{figure}[h]
    \centering
    \begin{minipage}[b]{0.49\textwidth}
        \centering
        \includegraphics[width=\textwidth]{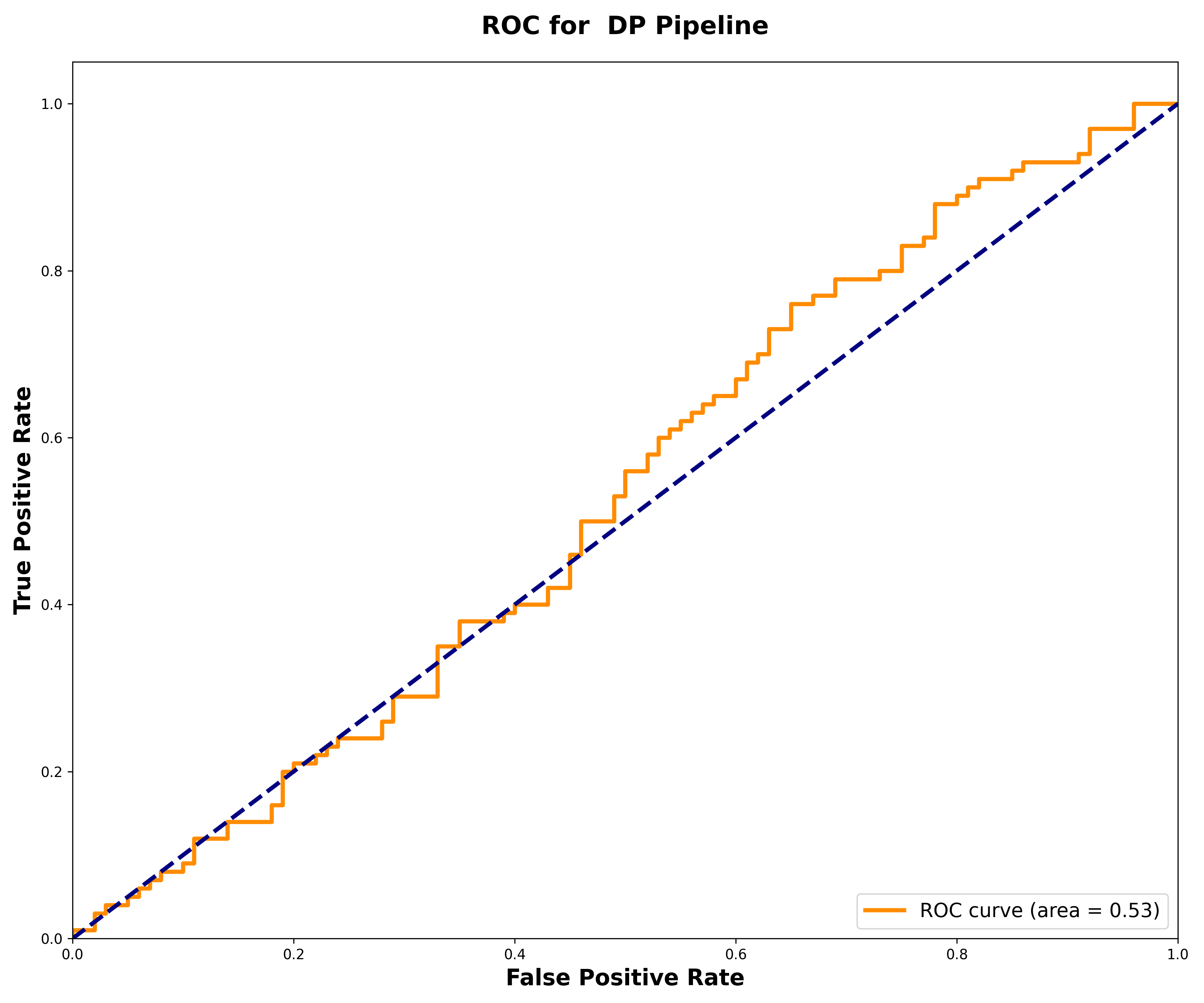}
        \caption*{\small (a) Empirical Privacy Leakage Measurement against DP pipeline ($\eps=21$, AUC = 0.53)}
    \end{minipage}
    \hfill
    \begin{minipage}[b]{0.49\textwidth}
        \centering
\includegraphics[width=\textwidth]{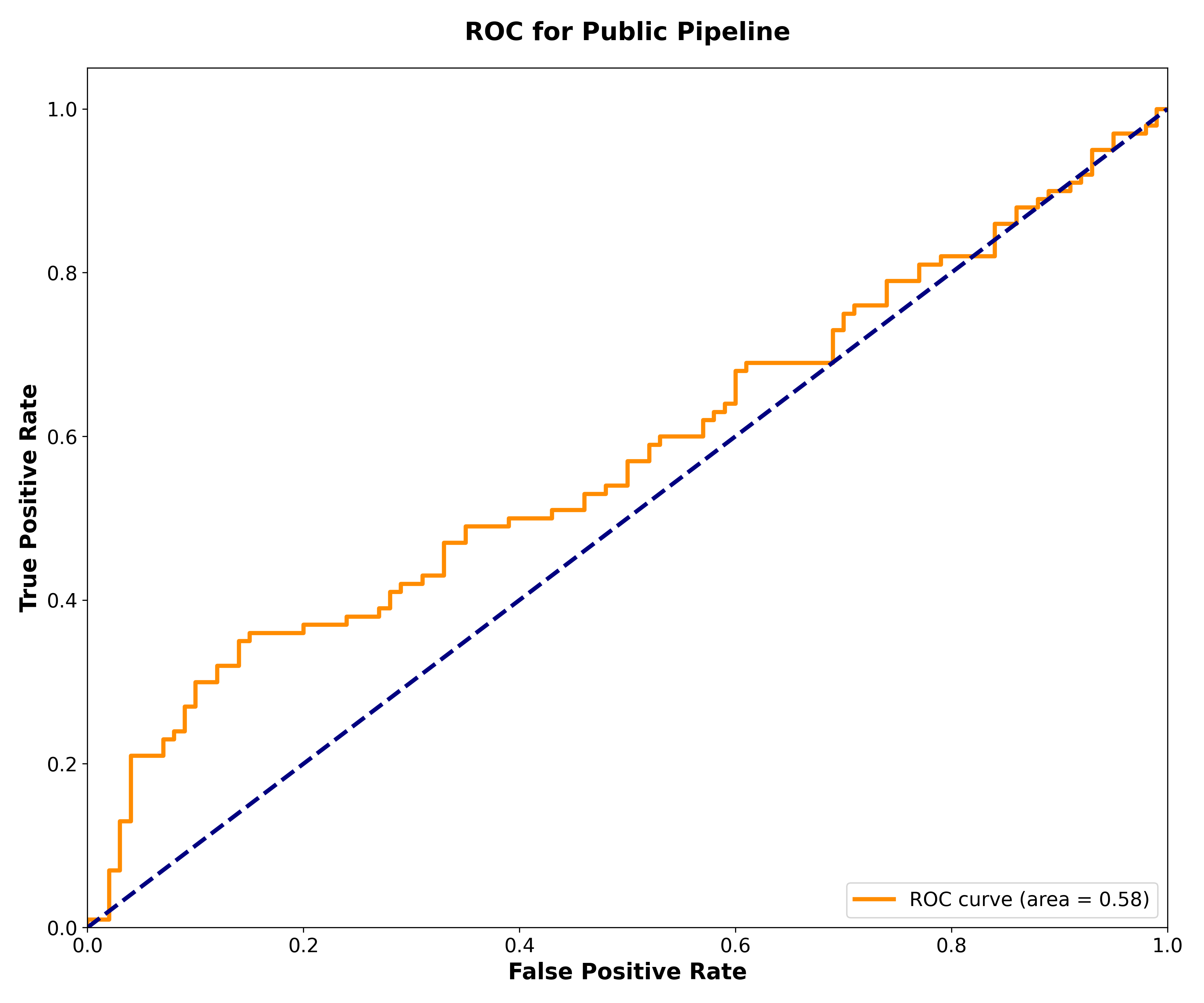}
        \caption*{\small (b) Empirical Privacy Leakage Measurement against public pipeline (AUC = 0.58)}
    \end{minipage}
    \caption{ROC curves for Empirical Privacy Leakage. The DP pipeline (a) shows performance equivalent to random guessing (AUC $\approx$ 0.53), while the public pipeline (b) is more vulnerable with an AUC of 0.58.}
    \label{fig:mia}
\end{figure}

%\subsubsection{Results and Discussion}
\Cref{fig:mia} shows the results. Our DP approach achieved an AUC of only 0.53, effectively equivalent to random guessing (0.5), demonstrating strong privacy protection. In contrast, the non-private pipeline showed greater vulnerability with an AUC of 0.58, indicating that embedding similarity can leak information of the sensitive conversations when privacy mechanisms are absent.

It is worth noting that our experiment used casually generated synthetic conversations rather than carefully crafted adversarial examples. A more sophisticated experiment might potentially achieve better performance against either pipeline. However, even with this simple approach, we can observe a clear difference in privacy protection between the DP and non-DP methods. 

These results provide empirical evidence that our DP approach offers practical protection against such privacy leakage while maintaining useful summarization capabilities, even when using a relatively high privacy budget ($\varepsilon=21$) on a relatively small dataset (of no more than 100 conversations).

\newpage
\section{LLM Prompts}\label{apx:prompts}

We provide some example prompts we used in the pipeline.

\subsection{Extracting Embeddings from Records}
\label{apx:embeddings_prompt}

Below, we provide the prompt used in the $\mathsf{ExtractEmbeddings}$ method in 
\Cref{alg:non-dp-pipeline} and \Cref{alg:dp-main}. 
The input record $x$ is substituted in place of $\langle x \rangle$ in the prompt.
Note that we apply the embedding function specifically to the \texttt{`Summary'} portion of the LLM response obtained using this prompt, rather than to the entire response.
% \pritish{Need to edit to say that we apply the embedding function on the ``summary'' portion of the LLM response obtained using the prompt below.}

\newcommand{\myindent}{\hspace*{3mm}}
\begin{tcolorbox}[
    colback=gray!10,
    colframe=blue!50!black,
    arc=5mm,
    outer arc=5mm,
    boxrule=1pt,
    title=Prompt for $\mathsf{ExtractEmbeddings}(x)$]
\begin{quotation}\begin{adjustwidth}{-4em}{-4em}\tt\footnotesize\mbox{}\\[-2mm]
\noindent You are an advanced AI assistant that processes conversations and extracts
meaningful facets for downstream embedding and clustering tasks. For the
conversation provided, return a structured JSON object with the following:\\
- Topics: The overarching topics discussed in the conversation.\\
- Subtopics: Specific details or points under each topic.\\
- Intent: The main intent of the conversation (e.g., seeking help, sharing an opinion).\\
- Entities: Important names, organizations, or proper nouns discussed.\\
- Keywords: The most important words/phrases that could help differentiate the conversation.\\
- Summary: A concise, one-sentence description summarizing the conversation, up to 20 words.\\

\noindent Ignore the conversation if it is not in English.\\
Output the result as a valid JSON object with no extra text.\\

\noindent \#\#\# Example:\\
User: I'm having a hard time preparing for job interviews. How do I answer behavioral questions? I also get nervous. Any tips on how to stay calm during the process?\\

\noindent Expected JSON Output:\\
\{\\
\myindent `Topics': [`Job Interviews', `Behavioral Questions', `Stress Management'],\\
\myindent `Subtopics': [`Handling difficult questions', `Managing anxiety during interviews', `Tips for preparation'],\\
\myindent `Intent': `Seeking advice',\\
\myindent `Entities': [`job interview', `behavioral questions'],\\
\myindent `Keywords': [`job preparation', `nerves', `behavioral interview tips'],\\
\myindent `Summary': `The user is seeking advice on preparing for behavioral questions and managing stress during job interviews.',\\
\}\\

\noindent \#\#\# Now, analyze the following conversation and generate the output in the same format:\\

\noindent User: $\langle x \rangle$\\
%$\langle${\em conversation text inserted here}$\rangle$\\

\noindent Return the structured output in JSON format.
\end{adjustwidth}
\end{quotation}
\end{tcolorbox}

\newpage
\subsection{\texorpdfstring{$\LLMSummarize$}{LLMSummarize} in \texorpdfstring{$\SimplifiedClio$}{Simple-Clio} (\texorpdfstring{\Cref{alg:non-dp-pipeline}}{Algorithm~\ref{alg:non-dp-pipeline}})}\label{apx:llm_summarize_prompt}

We provide the prompt used for $\LLMSummarize$ method in $\SimplifiedClio$.
In both $\Clio$ and our proposed pipeline implementations, we select contrastive summaries from the nearest points located outside the designated cluster. For the sake of algorithmic clarity, we omitted this nuanced selection strategy from \Cref{alg:non-dp-pipeline} to maintain a simplified representation of the core approach.

\begin{tcolorbox}[
    colback=gray!10,
    colframe=blue!50!black,
    arc=5mm,
    outer arc=5mm,
    boxrule=1pt,
    title=$\LLMSummarize$%(\text{within\_cluster\_summaries}, \text{contrastive\_summaries})$
]
\begin{quotation}\begin{adjustwidth}{-4em}{-4em}\tt\footnotesize\mbox{}\\[-2mm]
\noindent You are tasked with summarizing a group of related statements into a short, precise, and accurate description and name.  Your goal is to create a concise summary that captures the essence of these statements and distinguishes them from other similar groups of statements.\\

\noindent Summarize all the statements into a clear, precise, two-sentence description in the past tense. Your summary should be specific to this group and distinguish it from the contrastive answers of the other groups. 
After creating the summary, generate a short name for the group of statements. This name should be at most ten words long (perhaps less) and be specific but also reflective of most of the statements (rather than reflecting only one or two).\\

\noindent Present your output in the following format:\\
<summary> [Insert your two-sentence summary here] </summary>\\
<name> [Insert your generated short name here] </name>\\

\noindent Below are the related statements:\\
<answers>\\
- $\langle$summary 1 in within\_cluster\_summaries$\rangle$\\
- $\langle$summary 2 in within\_cluster\_summaries$\rangle$\\
- $\ldots$\\
%+ ''.join(f"- \{summary\}\textbackslash{}n" for summary in within\_cluster\_summaries) +\\
</answers>\\

\noindent For context, here are statements from nearby groups that are NOT part of the group you’re summarizing:\\
<contrastive\_answers>\\
- $\langle$summary 1 in contrastive\_summaries$\rangle$\\
- $\langle$summary 2 in contrastive\_summaries$\rangle$\\
- $\ldots$\\
%+ ''.join(f"- \{summary\}\textbackslash{}n" for summary in contrastive\_summaries) +\\
</contrastive\_answers>\\

\noindent Do not elaborate beyond what you say in the tags. Remember to analyze both the statements and the contrastive statements carefully to ensure your summary and name accurately represent the specific group while distinguishing it from others.
\end{adjustwidth}
\end{quotation}
\end{tcolorbox}

\newpage
\subsection{LLMSelectKeywords}
\label{apx:LLMSelectKeywords_prompt}
The following prompts are used to extract keywords given the conversation and the KwSet.

\begin{tcolorbox}[
    colback=gray!10,
    colframe=blue!50!black,
    arc=5mm,
    outer arc=5mm,
    boxrule=1pt,
    title=$\texttt{LLMSelectKeywords}$ %(\text{within\_cluster\_summaries}, \text{contrastive\_summaries})$
]
\begin{quotation}\begin{adjustwidth}{-4em}{-4em}\tt\footnotesize\mbox{}\\[-2mm]
\noindent Below is a summary of a conversation. Based on this summary, select the most relevant \{m\} keywords from the given set of keywords.\\

\noindent Summary: \{summary\}\\

\noindent Available Keywords: \{keyword$_1$\}, \{keyword$_2$\}, $\ldots$\\ %\{', '.join(keywords\_set)\} \\

\noindent Select up to \{m\} most relevant keywords from the set and return them as a Python list (e.g., [`keyword1', `keyword2', ...]).\\
\noindent If there is no relevant keyword, return the Python list [`NA']
\end{adjustwidth}
\end{quotation}
\end{tcolorbox}

\subsection{\texorpdfstring{$\LLMSummarize$}{LLMSummarize} in \texorpdfstring{\Cref{alg:dp-main}}{Algorithm~\ref{alg:dp-main}}}\label{apx:llm_summarize_2_prompt}
We modified the prompt for \Cref{alg:non-dp-pipeline}, and got the following prompt for $\LLMSummarize$ in \Cref{alg:dp-main}.

\begin{tcolorbox}[
    colback=gray!10,
    colframe=blue!50!black,
    arc=5mm,
    outer arc=5mm,
    boxrule=1pt,
    title=$\LLMSummarize$%(\text{within\_cluster\_summaries}, \text{contrastive\_summaries})$
]
\begin{quotation}\begin{adjustwidth}{-4em}{-4em}\tt\footnotesize\mbox{}\\[-2mm]
\noindent You are a language model tasked with summarizing a group of related keywords and example texts into a concise topic description. \\
\noindent    Your goal is to generate:\\
 \noindent   1. A **concise topic name** ($\le$10 words) that best describes the theme of these keywords and examples.\\
 \noindent   2. A **brief, 2-sentence description** explaining what this cluster is about.\\

\noindent   Please return your response in the following format:\\

 \noindent   <topic> [Insert topic name here] </topic>\\
 \noindent   <description> [Insert brief summary here] </description>\\

 \noindent   Below are the keywords associated with the cluster:\\
 \noindent   <keywords>\\
\noindent    \{keyword$_1$\}, \{keyword$_2$\}, $\ldots$\\
 \noindent   </keywords>\\

 \noindent   Ensure the topic name is **specific**, descriptive, and meaningful.
\end{adjustwidth}
\end{quotation}
\end{tcolorbox}

\newpage
\subsection{Prompts for \KwsetLLM and \KwsetTFIDF}\label{apx:prompt-kwset-llm}
We used LLM to choose keywords from conversations in the process of generating KwSet. 
We provide the prompt for generating \KwsetLLM for example.

\begin{tcolorbox}[
    colback=gray!10,
    colframe=blue!50!black,
    arc=5mm,
    outer arc=5mm,
    boxrule=1pt,
    title=\KwsetLLM Generation (Before Refinement) %(\text{within\_cluster\_summaries}, \text{contrastive\_summaries})$
]
\begin{quotation}\begin{adjustwidth}{-4em}{-4em}\tt\footnotesize\mbox{}\\[-2mm]
\noindent You are a knowledgeable AI assistant tasked with generating a set of comprehensive keywords that cover the topics, intents, and entities present in a dataset of human conversations.\\

 \noindent       For the summary provided, generate a list of **3-5** keywords that reflect:\\
 \noindent       - The core topic(s) of the conversation \\
 \noindent       - Key entities or names mentioned \\
\noindent        - Keywords useful for identifying conversation categories or topics\\

 \noindent       Keep the keywords concise and relevant.\\

  \noindent      Summary: \{conversation\_summary\}\\

 \noindent       Use this JSON schema:\\
 \noindent   Response = \{\{"keywords": list[str]\}\}\\

 \noindent  Return only the JSON object without any explanations, decorations, or code blocks. Only the raw JSON should be returned.
\end{adjustwidth}
\end{quotation}
\end{tcolorbox}

As a keyword set of smaller size leads to smaller utility loss due to privacy, we use an LLM to refine the keyword set (for both \KwsetTFIDF and \KwsetLLM) using the following prompt template.

\begin{tcolorbox}[
    colback=gray!10,
    colframe=blue!50!black,
    arc=5mm,
    outer arc=5mm,
    boxrule=1pt,
    title=Refine KwSet
]
\begin{quotation}\begin{adjustwidth}{-4em}{-4em}\tt\footnotesize\mbox{}\\[-2mm]
\noindent You are an expert in topic summarization. We have extracted keywords from a large collection 
    of conversations. These keywords are used for summarizing the main topics of the conversations based on sentence 
    statistics. Your task is to refine this list and select the most relevant \{num\} keywords that best capture the 
    core topics.\\

\noindent    Here is the extracted keyword list:\\

\noindent    \{json.dumps(keywords, indent=2)\} \\

\noindent    Please return exact \{num\} of the most relevant keywords in JSON format using the following schema:\\

 \noindent   ```json \\
 \noindent   \{\{\\
 \noindent
    "keywords": ["keyword1", "keyword2", ..., "keyword\{num\}"]\\
\noindent    \}\}\\
\noindent    ```\\
 \noindent   Do not exceed \{num\} keywords. If necessary, prioritize the most commonly occurring, diverse, and representative words.
\end{adjustwidth}
\end{quotation}
\end{tcolorbox}

\subsection{Prompt for \KwsetPublic}\label{apx:prompt-kwset-public}

\begin{tcolorbox}[
    colback=gray!10,
    colframe=blue!50!black,
    arc=5mm,
    outer arc=5mm,
    boxrule=1pt,
    title=Refine \KwsetPublic
]
\begin{quotation}\begin{adjustwidth}{-4em}{-4em}\tt\footnotesize\mbox{}\\[-2mm]
\noindent You are refining a set of extracted key phrases for topic summarization.\\
    
\noindent Below is a list of candidate keywords extracted from conversations:\\
\noindent [{`, '.join(all\_candidate\_keywords)}]\\

\noindent Your task:\\
\noindent - **Remove meaningless words** such as random numbers, dates, single letters, generic words (e.g., "thing", "something", "stuff").\\
\noindent - **Delete overly specific entity names** (e.g., "John Doe", "xyz123", "April 5, 2023", "ID number").\\
\noindent - **Remove redundant words** that mean the same thing.\\
\noindent - **Merge similar concepts** (e.g., "AI fairness" and "bias in AI" should be unified).\\
\noindent - **Keep only the most informative, meaningful terms** that are useful for summarizing topics.\\
\noindent - **Avoid vague or overly broad words** like "technology", "question", "people".\\
\noindent - **Ensure the final keywords are suitable for clustering and summarization**.\\

\noindent Return only the **refined keyword set** as a comma-separated list.
\end{adjustwidth}
\end{quotation}
\end{tcolorbox}

\begin{tcolorbox}[
    colback=gray!10,
    colframe=blue!50!black,
    arc=5mm,
    outer arc=5mm,
    boxrule=1pt,
    title=Update \KwsetPublic by adding and removing keywords
]
\begin{quotation}\begin{adjustwidth}{-4em}{-4em}\tt\footnotesize\mbox{}\\[-2mm]
\noindent You are an expert in topic summarization and keyword analysis. The current public keyword set contains \{len(existing\_keywords)\} words:\\
\noindent [\{`, '.join(existing\_keywords))\}]\\

\noindent Here is a batch of private conversations:\\
\noindent `\{batch\_text\}'\\

\noindent Your task:\\
\noindent - Compare the batch of private conversations with the current keyword set.\\
\noindent - Identify keywords in the current set that are redundant or irrelevant.\\
\noindent - Identify new, meaningful keywords present in the private conversations that are not in the current set.\\
\noindent - Output ONLY a complete, valid JSON object (with no extra text) with the following structure:\\

\noindent \{\{\\
    \hspace*{6mm}`words\_to\_remove': [list of words to remove],\\
    \hspace*{6mm}`words\_to\_append': [list of words to add]\\
\noindent \}\}\\

\noindent Ensure your output starts with `\{' and ends with `\}' (i.e., the JSON object must be complete, including the closing bracket).
\noindent Keep your answer as concise as possible.
\end{adjustwidth}
\end{quotation}
\end{tcolorbox}

\subsection{Prompt for \KwsetHybrid}\label{apx:prompt-kwset-hybrid}

\begin{tcolorbox}[
    colback=gray!10,
    colframe=blue!50!black,
    arc=5mm,
    outer arc=5mm,
    boxrule=1pt,
    title=Refine \KwsetHybrid
]
\begin{quotation}\begin{adjustwidth}{-4em}{-4em}\tt\footnotesize\mbox{}\\[-2mm]

\noindent You are an expert in topic analysis and keyword extraction.\\

\noindent The current public keyword set contains \{len(public\_keywords)\} words:\\
\noindent [\{`, '.join(public\_keywords)\}]\\

\noindent Here is a batch of private conversations:\\
\noindent `\{batch\_text\}'\\

\noindent Your task:\\
\noindent - Analyze the above conversations and identify any new, meaningful keywords that are not already present in the public set.\\
\noindent - Omit any non-English conversations or keywords.\\
\noindent - Return ONLY a concise, comma-separated list of candidate keywords with no additional explanation of at most 5 words.
\end{adjustwidth}
\end{quotation}
\end{tcolorbox}

\newpage
\subsection{Prompt for Evaluation}
As discussed before, we do some LLM-based evaluations. 
Now we provide the prompts for the binary preference.

\begin{tcolorbox}[
    colback=gray!10,
    colframe=blue!50!black,
    arc=5mm,
    outer arc=5mm,
    boxrule=1pt,
    title=Prompt for Binary Preference
]
\begin{quotation}\begin{adjustwidth}{-4em}{-4em}\tt\footnotesize\mbox{}\\[-2mm]
\noindent You are an expert evaluator of text summarization systems. You will be given an original text and two different summaries of that text. Your task is to evaluate which summary better captures the key themes and content of the original text.\\

\noindent Original Text:\\
\noindent \{text\}\\

\noindent Summary A:\\
\noindent \{summary$\_$a\} \\

\noindent Summary B:\\
\noindent \{summary$\_$b\} \\

\noindent Please evaluate which summary better captures the key themes and content of the original text. Consider factors such as:\\
\noindent - Accuracy of the information\\
\noindent - Coverage of important topics\\
\noindent - Clarity and coherence\\
\noindent - Relevance to the original text\\

\noindent Provide your reasoning, then end with a clear choice in the format: <choice>X</choice> where X is either "A" for the first summary or "B" for the second summary.
\end{adjustwidth}
\end{quotation}
\end{tcolorbox}

\newpage
\section{Hierarchy and Visualization}
\label{apx:hierarchy}
\subsection{Hierarchical Organization}

\begin{table}[htbp] % Placement preference: here, top, bottom, page
\centering % Center the table on the page
\caption{Examples of Summaries and Cluster Levels}
\label{tab:aligned_clusters_colm_style}
\renewcommand{\arraystretch}{1.3} % Adjust vertical spacing slightly if needed

% Using a single wide column to mimic the block structure
% \textwidth refers to the width of the text block.
% -2\tabcolsep accounts for the padding on either side of the column.
\begin{tabular}{p{\dimexpr\textwidth-2\tabcolsep}}
\toprule % Top line of the table

% --- Example 1 (From Index 10) ---
\textbf{SUMMARY:} \\
The conversation starts with greetings in Portuguese, inquiring about the other person's well-being. \\[0.5ex] % Small vertical space
\textbf{CLUSTERING:} \\
\hspace*{1em} \textbf{Base:} AI Travel Assistant for Multilingual and Localized Recommendations \\
\hspace*{1em} \textbf{Top:} AI Assistant Capabilities, Performance, and Applications \\
\midrule % Line between examples

% --- Example 2 (From Index 12) ---
\textbf{SUMMARY:} \\
The user requested a Python program to create a SQLite database table named 'legacy' with specified fields. \\[0.5ex]
\textbf{CLUSTERING:} \\
\hspace*{1em} \textbf{Base:} Python Data Manipulation with Pandas and SQL \\
\hspace*{1em} \textbf{Top:} Diverse AI Applications, Analysis, and Performance \\
\midrule % Line between examples

% --- Example 3 (From Index 17) ---
\textbf{SUMMARY:} \\
The AI introduces itself, clarifies its capabilities, and offers assistance to the user. \\[0.5ex]
\textbf{CLUSTERING:} \\
\hspace*{1em} \textbf{Base:} AI Assistant Initial Interaction and Prompting \\
\hspace*{1em} \textbf{Top:} Diverse AI Applications, Analysis, and Performance \\
\midrule % Line between examples

% --- Example 4 (From Index 28) ---
\textbf{SUMMARY:} \\
The assistant provides a detailed introduction of Tilley Chemical Co., Inc., including its history, services, and sustainability efforts. \\[0.5ex]
\textbf{CLUSTERING:} \\
\hspace*{1em} \textbf{Base:} Chemical Company Profile: Manufacturing, Performance, and Optimization \\
\hspace*{1em} \textbf{Top:} Broad AI Applications, Ethics, and General Knowledge \\
\bottomrule % Bottom line of the table
\end{tabular}
\label{table:example_of_summaires}
\end{table}

Following $\Clio$'s approach~\citep{tamkin24clio}, we implement a hierarchical organization of summaries to improve navigation and comprehension of large conversation datasets. Our process creates a two-level hierarchy with high-level topic clusters containing related lower-level summaries:

\begin{enumerate}[nosep]
    \item \textbf{Low-level Summary Generation:} We first generate approximately 4,000 low-level summaries using our DP pipeline.
    
    \item \textbf{Embedding and Clustering:} These summaries are converted to embeddings and clustered using k-means with $k\approx 70$ to identify broader thematic groups.
    
    \item \textbf{High-level Naming:} For each of the 70 high-level clusters, we prompt Gemini to suggest descriptive names based on the contained summaries.
    
    \item \textbf{Deduplication and Refinement:} We use Gemini to deduplicate and refine these suggested names, eliminating overlaps and ensuring distinctiveness.
    
    \item \textbf{Low-level Assignment:} We represent each high-level name as a center in the embedding space and assign each low-level cluster to its nearest high-level name.
    
    \item \textbf{Final Renaming:} Based on the final assignment of low-level clusters, we rename each high-level cluster to better represent its contents.
\end{enumerate}

This hierarchical organization allows users to navigate from broad topics to specific conversation summaries, making the system more useful for exploring large conversation datasets while maintaining DP guarantees.
Some examples of the summary facet, base, and top cluster are demonstrated in Table~\ref{table:example_of_summaires}.

\end{document}